\definecolor{table_color}{HTML}{FF7F0E}
\newcommand\bolden[1]{{\boldmath\bfseries#1}}
\definecolor{codegreen}{rgb}{0,0.6,0}
\definecolor{codegray}{rgb}{0.5,0.5,0.5}
\definecolor{codepurple}{rgb}{0.58,0,0.82}
\definecolor{backcolour}{rgb}{0.95,0.95,0.92}
\lstdefinestyle{mystyle}{
    backgroundcolor=\color{backcolour},   
    commentstyle=\color{codegreen},
    keywordstyle=\color{magenta},
    numberstyle=\tiny\color{codegray},
    stringstyle=\color{codepurple},
    basicstyle=\ttfamily\footnotesize,
    breakatwhitespace=false,         
    breaklines=true,                 
    captionpos=b,                    
    keepspaces=true,                 
    numbers=left,                    
    numbersep=5pt,                  
    showspaces=false,                
    showstringspaces=false,
    showtabs=false,                  
    tabsize=2
}
\title{Interpretable Tensor Fusion}
\author{
Saurabh Varshneya$^1$
\and
Antoine Ledent$^2$\and
Philipp Liznerski$^1$\and
Andriy Balinskyy$^1$\and
Purvanshi Mehta$^3$\and
Waleed Mustafa$^1$\And
Marius Kloft$^1$\\
\affiliations
$^1$RPTU Kaiserslautern-Landau\\
$^2$Singapore Management University\\
$^3$Lica World\\
\emails
\{varshneya, liznerski, balinskyy, mustafa, kloft\}@cs.uni-kl.de, purvanshi@lica.world, aledent@smu.edu.sg
}
\def\Figref#1{Figure~\ref{#1}}
\def\eqref#1{equation~\ref{#1}}
\def\1{\bm{1}}
\DeclareMathAlphabet{\mathsfit}{\encodingdefault}{\sfdefault}{m}{sl}
\SetMathAlphabet{\mathsfit}{bold}{\encodingdefault}{\sfdefault}{bx}{n}
\def\sR{{\mathbb{R}}}
\newcommand{\E}{\mathbb{E}}
\DeclareMathOperator*{\argmax}{arg\,max}
\DeclareMathOperator*{\argmin}{arg\,min}
\newcommand\nbull{{\kern.8pt\raise1.5pt\hbox{\small\bf .}\kern.8pt}}
\DeclareMathOperator{\sumn}{\sum_{i=1}^n}
\DeclareMathOperator{\Fr}{Fr}
\DeclareMathOperator*{\minimize}{minimize}
\newcommand\blfootnote[1]{%
  \begingroup
  \renewcommand\thefootnote{}\footnote{#1}%
  \addtocounter{footnote}{-1}%
  \endgroup
}
\definecolor{shadecolor}{gray}{0.875}
\newtheorem{theorem}{Theorem}
\newtheorem{thm}{Theorem}[section] \newtheorem{lemma}[thm]{Lemma}
\newtheorem{prop}[thm]{Proposition}
\newacronym{mkl}{MKL}{Multiple Kernel Learning}
\newacronym{mnl}{MNL}{Multiple Neural Learning}
\newacronym{vbn}{VBN}{Vector-wise Batch Normalization}
\newacronym{ibn}{IterBN}{Iterative Batch Normalization}
\newacronym{ourfusion}{InTense}{Interpretable Tensor Fusion}
\newacronym{bn}{batch norm}{Batch Normalization}
\newacronym{mfm}{MFM}{Multimodal Factorization Model}
\newacronym{mro}{MRO}{Multimodal Residual Optimization}
\newacronym{toydata}{}{SynthGene}
\newacronym{itf}{}{InTense}
\begin{document}

\maketitle

\begin{abstract}
Conventional machine learning methods are predominantly designed to predict outcomes based on a single data type. However, practical applications may encompass data of diverse types, such as text, images, and audio. We introduce interpretable tensor fusion (InTense), a multimodal learning method for training neural networks to simultaneously learn multimodal data representations and their interpretable fusion. InTense can separately capture both linear combinations and multiplicative interactions of diverse data types, thereby disentangling higher-order interactions from the individual effects of each modality. InTense provides interpretability out of the box by assigning relevance scores to modalities and their associations. The approach is theoretically grounded and yields meaningful relevance scores on multiple synthetic and real-world datasets. Experiments on six real-world datasets show that InTense outperforms existing state-of-the-art multimodal interpretable approaches in terms of accuracy and interpretability. \blfootnote{Code will be available at: https://github.com/srb-cv/InTense}
\end{abstract}
\section{Introduction}
\label{sec:introduction}

The vast majority of machine learning systems are designed to predict outcomes based on a single datatype or ``modality''. However, in various applications---spanning fields from biology and medicine to engineering and multimedia---multiple modalities are frequently in play \cite{he2020advances,lunghi2019multimodal}. The main challenge in multimodal learning is how to effectively fuse these diverse modalities. The most common approach is to combine the modalities in an additive way~\cite{poria2015deep,chagas2020classification}. Such linear combinations suffice in some cases. However, numerous applications necessitate capturing non-linear interactions between modalities. One such instance is sarcasm detection, described in Figure \ref{fig:concept_fig}~\cite{hessel2020does}. The arguably most popular approach to capture non-linear interactions of modalities is ``Tensor fusion''~\cite{zadeh2017tensor,tsai2019multimodal,liang2021multibench}. The main idea in tensor fusion is to concatenate modalities via tensor products in a neural network. 

\begin{figure}[t] 
	\centering
        \includegraphics[width=\columnwidth]{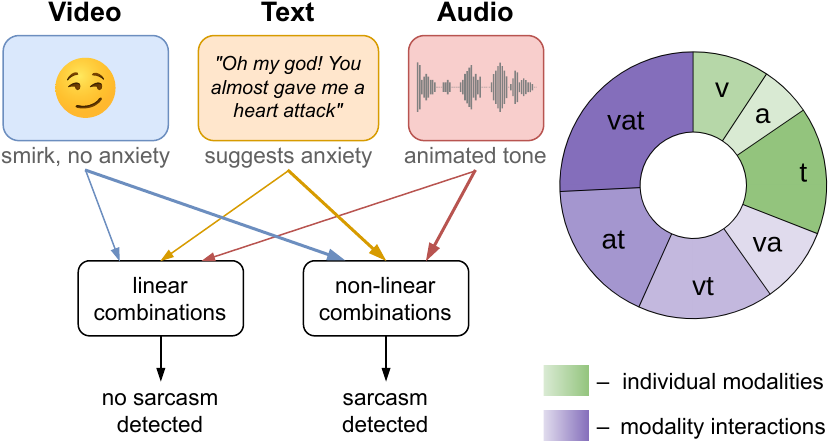}
     \caption{
    Left is an excerpt of the MUStARD dataset on sarcasm detection, where the proposed InTense method sets a new state-of-the-art. (See Section 4 for details.) A linear combination of modalities fails here because the expressions of happiness and anxiety combine to something neutral rather than sarcasm. To detect sarcasm, the interactions among modalities are crucial. InTense captures these interactions and assigns them with interpretable relevance scores, shown in the pie chart. Scores for individual modalities and their interactions are colored green and blue, respectively. InTense reveals that interactions are crucial for successful sarcasm detection.
     } 
     \label{fig:concept_fig}
\end{figure}

A substantial drawback of tensor fusion is its inherent lack of interpretability, which can significantly hinder its application in real-world scenarios. Interpretable multimodal models may reveal the relative importance of modalities \cite{buchel1998multimodal,hessel2020does}, unveiling spurious modalities and social biases in the data. Identifying interactions among modalities is the main goal in several application domains. For instance, in statistical genetics, it is crucial to identify the interactions among Single Nucleotide Polymorphisms (SNPs) that contribute to the inheritance of a disease~\cite{behravan2018machine,elgart2022non}. Although some interpretable multimodal methods exist, they are limited to linear combinations or require resource-intensive post hoc algorithms for interpretation.

In this paper, we introduce \emph{interpretable tensor fusion} (InTense), which jointly learns multimodal neural representations and their interpretable fusion. InTense provides out-of-the-box interpretability by assigning relevance scores to all modalities and their interactions. Our approach is inspired by multiple kernel learning~\cite{kloft2011lp}, a classic kernel-based approach to interpretable multimodal learning, which we generalize to deep neural networks and term \emph{Multiple Neural Learning} (MNL). While both MNL and InTense provide relevance scores for the modalities, InTense additionally produces scores for the interactions of modalities. These interaction scores are made possible through a novel interpretation of neural network weight matrices: We show that neural networks tend to favor higher-order tensor products, leading to spurious interpretations (i.e., overstating high-order interactions between modalities). We resolve this issue by deriving a theoretically well-founded normalization approach. In the theoretical analysis, we prove that this produces genuine relevance scores, avoiding spurious interpretations. In extensive experiments, we empirically validate the relevance scores on data and show that InTense outperforms existing state-of-the-art multimodal interpretable approaches in terms of accuracy and interpretability.

In summary, our contributions are:
\begin{itemize}
\item We introduce \emph{Multiple Neural Learning} (MNL), a theoretically guided adaptation of the established Multiple Kernel Learning algorithm to deep learning.
\item We introduce InTense, an extension of MNL and tensor fusion designed to capture non-linear interactions among modalities in an interpretable manner.
\item We provide a rigorous theoretical analysis that provides evidence of the correct disentanglement within our fusion framework.
\item We validate our approach through extensive experiments, where we meet the state-of-the-art classification accuracy while providing robust interpretability.
\end{itemize}

\section{Related Work}
\label{sec:related_work}
We now review existing multimodal learning methods that produce interpretability scores for the modalities.

\paragraph{Interpretable Methods for Learning \emph{Linear} Combinations of Modalities.}

The vast majority of interpretable multimodal learning methods consider linear combinations of modalities. The arguably most popular instance is \acrfull{mkl}, where kernels from different modalities are combined linearly. Here, a weight is learned for each kernel determining its importance in the resulting linear combination of kernels~\cite{kloft2011lp,rakotomamonjy2008simplemkl}. However, the performance of MKL is limited by the quality of the kernels. Finding adequate kernels can be especially problematic for structured high-dimensional data, such as text or images. Addressing this, several authors have studied combining multiple modalities using neural networks in a linear manner~\cite{poria2015deep,chen2014emotion,arabaci2021multi}. However, these representations are independently learned to form basis kernels and later combined in a second step through an SVM or another shallow learning method. Such independently learned representations cannot properly capture modality interactions.

\paragraph{Methods for Learning \emph{Non-linear} Combinations of Modalities.} 
\citeauthor{hessel2020does}~\shortcite{hessel2020does} map neural representations to a space defined by a linear combination of the modalities. While they quantify the overall importance of non-linear interactions, they do not provide scores for individual modality interactions. \citeauthor{tsai2020multimodal}~\shortcite{tsai2020multimodal} introduce multimodal routing, which is based on dynamic routing~\cite{sabour2017dynamic}, to calculate scores for the modality interactions. These scores depend on the similarity of a modality's representation to so-called concept vectors, where one such vector is defined for each label. However, routing does not distinguish between linear and non-linear combinations and is thus misled by partially redundant information in the combinations. Indeed, we show through experiments (see Section 4) that the non-linear combinations learned by routing are incorrectly overestimated.
\citeauthor{gat2021perceptual}~\shortcite{gat2021perceptual} propose a method to obtain modality relevances by computing differences of accuracies on a test set and a permuted test set. However, this method has limited interpretability and requires multiple forward passes through the trained network to obtain relevance scores.
\citeauthor{wortwein2022beyond}~\shortcite{wortwein2022beyond} learn an aggregated representation for unimodal, bimodal, and trimodal interactions, respectively. However, their method does not learn fine-grained relevance scores for the various combinations of modalities. Alongside methods offering limited interpretability, there exist methods that non-linearly combine modalities without adding any interpretability~\cite{zhang2023meta,liang2021multibench,tan2019lxmert}.

In summary, none of these methods learns proper relevance scores of interactions between modalities.

\paragraph{Post-hoc Explanation Methods.}
There exist several methods for post-hoc explanation of multimodal learning methods~\cite{gat2021perceptual,chandrasekaran2018explanations,park2018multimodal,kanehira2019multimodal,cao2020behind,frank2021vision}. These methods consist of two steps: first, training a multimodal model that is not inherently interpretable, followed by the calculation of relevance scores in hindsight. However, their two-step nature makes these methods challenging to analyze theoretically. Moreover, since the initial model disregards interpretability, it may lead to inherent limitations in the explanatory process. Additionally, these methods come with the added computational burden of producing relevance scores. Another limitation is their applicability, which is confined to specific types of modalities. 




\section{Methodology}
In the following sections, we introduce several components comprising our approach.
First, we review the classical $L_p$-norm \acrfull{mkl} framework~\cite{kloft2011lp}, which we extend to Multiple Neural Learning. Subsequently, we propose Interpretable Tensor Fusion (InTense), which captures non-linear modality interactions. Furthermore, we show how InTense learns disentangled neural representations, thereby computing correct relevance scores. 
\subsection{Preliminaries}
We consider a dataset $\{(x_i,y_i)\}_{i=1}^n$ with labels $y_i \in \{-1, 1\}$. The inputs have $M$ modalities, where $x_i^m \in \mathcal{X}^m$ for $m\in \{1, \ldots, M\}$ denotes the $m^{th}$ modality of the datapoint $x_i$, and $\mathcal{X}^m$ is the input space associated with the modality $m$. 
In \acrshort{mkl}, one considers kernel mixtures of the form
$k(u,v)=\sum_{m=1}^M\beta_mk_m(u,v),$  
where $k_m(u,v)$ is a base kernel and $\beta_m\geq0$ for all $m$. Imposing an $L_p$-norm constraint on the vector $\beta\in \mathbb{R}^M$ gives rise to the following classic optimization problem: 

\begin{multline}
\label{MKLL}
\minimize_{\substack{w_1,w_2,\cdots,w_L, \beta \\ \beta\in \sR^M,\beta\geq 0, \|\beta\|_p\leq 1}} \Bigg(\sumn \ell\Big(  \sum_{m=1}^M \sqrt{\beta_m} \langle w_m, \Psi_m(x_i^m)\rangle_{\mathcal{H}^m}\\ +b ,y_i\Big)
+\frac{\lambda}{2}\sum_{m=1}^M \|w_m\|_{L^2(\mathcal{H}^m)}^2\Bigg), \;  
\end{multline} 

\noindent where $\ell$ is a loss function, and $\Psi_m:\mathcal{X}^m\rightarrow \mathcal{H}^m$ are feature maps from the input space $\mathcal{X}^m$ to the Hilbert space $\mathcal{H}^m$ associated with kernel $k_m$ such that for each $m\in \{1,2,\ldots,M\}$ and $u,v\in \mathcal{X}^m$, $k_m(u,v)=\langle \Psi_m(u),\Psi_m(v)\rangle_{\mathcal{H}^m}$, where $\langle \cdot,\cdot \rangle_{\mathcal{H}}^m$ denotes the inner-product associated with the Hilbert space $\mathcal{H}^m$. The base kernels $k_m$ are assumed to be fixed functions. This is in sharp contrast to \acrfull{mnl} introduced in the next section, where each feature map $\Psi_m(x)$ is learned from the data. 
\subsection{Multiple Neural Learning\label{sec:deepmkl}}
In this section, we propose Multiple Neural Learning (MNL), \textbf{an interpretable method for linear combination of modalities}.
In MNL, we train a neural network composed of two components: 1) modality subnetworks that output a neural representation for each modality and 2) a linear fusion layer that combines the representations in an interpretable manner. 
We define the optimization problem as:
\begin{multline}
\label{eq:BASIC}
\minimize_{\substack{w_L^1, \cdots ,w_L^M, \beta, \\ W^1,W^2, \cdots, W^M,\\ \beta\in \sR^M,\beta\geq 0, \|\beta\|_p\leq 1}}\Bigg(\sum_{i=1}^n \ell\Big(\sum_{m=1}^M \sqrt{\beta_m} \langle w_L^m,f^{m}(x_i^m)\rangle\\[-20pt] +b,y_i\Big)
+\Lambda \sum_{l=1}^L\sum_{m=1}^M \|w^m_l\|_2^2\Bigg),
\end{multline}
\noindent where $f^m~$
is the $m^{th}$ modality's subnetwork composed of $L-1$ layers with weights $W^{m} = \{w^{m}_1,\ldots,w^{m}_{L-1}\}$.
A representation for the $i^{th}$ data point's $m^{th}$ modality is obtained by $f^m(x_i^m)$. The fusion layer $L$ with weights $w_L^{1},\dots,w_L^{M}$ learns a linear combination of the modality representations.
$\Lambda$ and $p\;  
(1\leq p<\infty)$  are hyperparameters and $\ell(t,y)= -\log\left(\frac{\exp(ty)}{1+\exp(ty)}\right)$ is the cross-entropy loss function. 
This setup can also be seen as additive fusion because it represents a linear combination of the modalities with weights $\sqrt{\beta_m}$.

Notably, $\sqrt{\beta_m}$ is a positive weight for the $m$'th modality, indicating its relevance score. The vector $\beta$ is simultaneously optimized with the network weights. However, the constraints on $\beta$ introduce an increased difficulty in optimizing \eqref{eq:BASIC}. The following theorem presents a simplified optimization problem by eliminating $\beta$ from \eqref{eq:BASIC} along with a method to retrieve $\beta$ from the learned weights.

\begin{theorem}
    \label{DasTheorem}
    The optimization problem in~\eqref{eq:BASIC} is equivalent to the following problem, where the parameters $\beta$ are no longer present:
    
\begin{align}
\label{eq:deepmkl}
\minimize_{\substack{w_L^1,w_L^2,\ldots,w_L^M, \\ W^1,W^2, \cdots, W^M}}\sum_{i=1}^n \ell\left(\sum_{m=1}^M  \langle w_L^m,f^{m}(x^m_i)\rangle +b,y_i\right) \nonumber \\[-5pt]
+ \Lambda \sum_{l=1}^{L-1}\sum_{m=1}^M \|w^m_l\|_2^2+\Lambda\left(\sum_{m=1}^M \|w^m_L\|_2^q\right)^{\frac{2}{q}},
\end{align}

\noindent where $q=\frac{2p}{p+1}$ (and therefore $1\leq q\leq 2$).
The corresponding values of relevance score $\beta$ can be recovered after the optimization as: 
            
\begin{align}
             \label{eq:recoverBeta}
\beta_{m}= \frac{     \|w^m_L\|_2   ^{\frac{2}{p+1}}       }{    \left(\sum_{\tilde{m}=1}^M \|w^{\tilde{m}}_L\|_2^{\frac{2p}{p+1}}\right)^{\frac{1}{p}}         }.
    \end{align}	
\end{theorem}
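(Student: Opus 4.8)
The plan is to eliminate $\beta$ by carrying out the minimisation over $\beta$ in closed form for arbitrary fixed values of all network weights, and to read off \eqref{eq:recoverBeta} as the argmin of that inner problem. First I would reparametrise the last layer by setting $v^m := \sqrt{\beta_m}\, w_L^m$ for each $m$ (a bijection between $(w_L^m,\beta_m)$ and $(v^m,\beta_m)$ whenever $\beta_m>0$; a coordinate with $\beta_m=0$ contributes nothing to the loss and is minimised by taking $w_L^m=0$, so such coordinates may be discarded with no loss of optimality). Under this substitution the fusion term $\sum_m\sqrt{\beta_m}\langle w_L^m,f^m(x_i^m)\rangle$ becomes $\sum_m\langle v^m,f^m(x_i^m)\rangle$, which is free of $\beta$, while the last-layer penalty transforms as $\|w_L^m\|_2^2=\|v^m\|_2^2/\beta_m$; the penalties on layers $l=1,\dots,L-1$ are unchanged. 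Hence \eqref{eq:BASIC} equals, after reparametrisation, the minimisation over $v^1,\dots,v^M$, $W^1,\dots,W^M$ and $\beta$ (still with $\beta\geq 0$, $\|\beta\|_p\leq 1$) of the data-fit term plus $\Lambda\sum_{l=1}^{L-1}\sum_m\|w_l^m\|_2^2$ plus $\Lambda\sum_m\|v^m\|_2^2/\beta_m$. Since $\beta$ now appears only in the last summand, the minimisation over $\beta$ can be pulled inside.

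The core step is therefore the scalar problem: given fixed nonnegative reals $a_m:=\|v^m\|_2^2$, minimise $g(\beta)=\sum_{m=1}^M a_m/\beta_m$ subject to $\beta\geq 0$ and $\sum_m\beta_m^p\leq 1$. This is a convex program --- $g$ is convex on the open positive orthant, the feasible set is convex, and Slater's condition holds --- so the KKT conditions are necessary and sufficient. Stationarity reads $-a_m/\beta_m^2+p\mu\,\beta_m^{p-1}=0$, giving $\beta_m\propto a_m^{1/(p+1)}$; the norm constraint is active at the optimum (scaling any feasible $\beta$ up towards the sphere strictly decreases $g$), and imposing $\sum_m\beta_m^p=1$ fixes the multiplier, yielding $\beta_m^\star=a_m^{1/(p+1)}\big/\big(\sum_{\tilde m}a_{\tilde m}^{p/(p+1)}\big)^{1/p}$ and optimal value $g(\beta^\star)=\big(\sum_m a_m^{p/(p+1)}\big)^{(p+1)/p}$. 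Equivalently, this is Hölder's inequality applied to the splitting $a_m^{p/(p+1)}=(a_m/\beta_m)^{p/(p+1)}\beta_m^{p/(p+1)}$ with conjugate exponents $\tfrac{p+1}{p}$ and $p+1$, with the equality case pinning down $\beta^\star$.

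To conclude, I would substitute $a_m=\|v^m\|_2^2$ and use the defining identity $q=\tfrac{2p}{p+1}$, so that $a_m^{p/(p+1)}=\|v^m\|_2^q$ and $(p+1)/p=2/q$; the minimised last-layer penalty becomes exactly $\Lambda\big(\sum_m\|v^m\|_2^q\big)^{2/q}$. Renaming $v^m$ back to $w_L^m$ turns the resulting problem into \eqref{eq:deepmkl} verbatim, and the minimiser $\beta^\star$ computed above is precisely \eqref{eq:recoverBeta} (with $\|v^m\|_2$ in place of $\|w_L^m\|_2$, the two agreeing at any solution). Because the reparametrisation is a bijection on the relevant domain and the $\beta$-minimisation was performed exactly, \eqref{eq:BASIC} and \eqref{eq:deepmkl} share the same optimal value and their minimisers correspond via \eqref{eq:recoverBeta}, which is the claimed equivalence.

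I expect the genuine difficulty to be bookkeeping rather than analysis: cleanly handling the boundary $\beta_m=0$ (where the substitution degenerates and one must argue separately, or by continuity, that such coordinates are inert), checking that the $\|\beta\|_p\leq 1$ constraint is tight so the Lagrangian computation is valid, and dealing with the fully degenerate case in which all $w_L^m$ vanish (where $\beta$ is undetermined but the classifier is trivial). The mathematical heart --- the closed-form $\beta$-minimisation --- is a one-line convex-optimisation / Hölder argument, essentially the same reduction that underlies $L_p$-norm \acrshort{mkl}.
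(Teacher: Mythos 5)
Your proposal is correct and follows essentially the same route as the paper: the substitution $w_L^m \leftarrow \sqrt{\beta_m}\,w_L^m$ to push $\beta$ into the last-layer penalty as $\|w_L^m\|_2^2/\beta_m$, followed by the closed-form minimisation of $\sum_m A_m/\beta_m$ over $\{\beta\geq 0,\ \|\beta\|_p^p\leq 1\}$ (which the paper carries out via a Lagrangian-dual computation in its Proposition, equivalent to your KKT/H\"older argument), yielding the block $L^q$ penalty with $q=\tfrac{2p}{p+1}$ and the recovery formula \eqref{eq:recoverBeta}. Your explicit attention to the boundary cases $\beta_m=0$ and to tightness of the norm constraint is slightly more careful than the paper's exposition, but the mathematical content is identical.
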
 

The theorem states that the relevance of a modality in our jointly trained network can be obtained by applying a suitable $p-$norm over the weights of the fusion layer $L$. A detailed proof of the theorem can be found in Appendix A. The central idea is observing that the parameters $\beta$ can be absorbed into the weights $w_L^m$, pushing $\beta$ into the regularization term. Subsequently, by showing that $\beta$ can be minimized independently from the weights, the optimal value is attained through~\eqref{eq:recoverBeta}. Absorbing $\beta$ in fusion weights in turn introduces the additional block $L^q$ norm regularization term $\Lambda(\sum_{m=1}^M \|w^m_L\|_2^q)^{\frac{2}{q}}$. 

\paragraph{Correct Relevance Scores Through Normalization.}
In pre-experiments (see Appendix F.1) we found that the relevance scores can be misleading, especially when the network outputs high activation values for some modalities. We address this issue with proper normalization techniques.
We propose an adaptation of the standard \acrfull{bn}~\cite{ioffe2015batch}, which we call \acrfull{vbn}.
VBN ensures that the $L_2$-norm of the activation values, \textit{averaged over a mini-batch}, is constant.
Let $B$ be a mini-batch of datapoints' indices. We define VBN as:  

\begin{align}
\label{eq:vbn_eq}
\tilde{f}^m(x_i^m)=\frac{f^m(x^m_i)-\mu_{B,m}}
{\sigma_{B,m}},  \text{where} \\
\mu_{B,m}=\frac{\sum_{i\in B }f^m(x^m_i)}{|B|}, \text{and}\\ 
\label{eq:vbn}
\sigma_{B,m}^2=\frac{\sum_{i\in B }\|f^m(x^m_i)-\mu_{B,m}\|_2^2}{|B|}.    
\end{align}

The mean $\mu_{B,m}$ is computed element-wise as in the standard batch norm. However, in equation \ref{eq:vbn}, instead of computing the variance element-wise, we calculate the average of the squared $L_2-$norm of a modality representation across the mini-batch.
Unlike batch norm, we do not shift and scale the representations element-wise after the normalization step.
Using \acrshort{vbn}, the loss in \eqref{eq:deepmkl} changes to:  

$$\sum_{i=1}^n \ell\left(\sum_{m=1}^M  \left\langle w_L^m,\tilde{f}^{m}(x^m_i)\right \rangle +b,y_i\right).$$  
Note that \acrshort{vbn} is applied \textit{after} the activation function to obtain $\tilde{f}^m(x^m)$.
We found empirically that proper normalization is crucial for MNL to achieve competitive performance.


\subsection{Interpretable Tensor Fusion}  \label{sec:intense}
In this section, we propose Interpretable Tensor Fusion (InTense), an extension of MNL that additionally produces scores for interactions (non-linear combinations) of modalities. InTense is based on tensor fusion, which captures multiplicative interactions among modalities by computing a tensor product over the individual modality representations~\cite{zadeh2017tensor}. InTense operates as follows.
For a dataset with $M$ modalities, we consider interactions up to a given order of $D$, where $D \leq M$.
An order of $D$ implies interaction among $D$ modalities.
A multiplicative interaction of modalities is defined by a subset $I \in \mathcal{I}$, where $\mathcal{I} = \{ J \subset \{1,\dots,M\} : |J| \leq D\}$, and a tensor product $f^I(x):=f^{I_1}\otimes f^{I_2}\otimes \ldots f^{I_{|I|}}$, where $f^{I_m}$ is the representation of modality $I_m$, and $\otimes$ denotes the tensor product operator. Analogously to \eqref{eq:BASIC}, we obtain a new objective:

\begin{multline} \label{eq:tensor_fusion_basic}
	\minimize_{\substack{w_L^I, \beta_I: I \in \mathcal{I},\\\|\beta\|_p\leq 1, \beta_I \ge 0   \\ W^1,W^2, \cdots, W^M}}\Bigg(\sum_{i=1}^n \ell\Big(\sum_{I\subset \mathcal{I} } \sqrt{\beta_I} \langle w_L^I,f^{I}(x)\rangle 
	+b,y_i\Big)\\[-20pt]+\Lambda \sum_{l=1}^L\sum_{m=1}^M \|w^m_l\|_2^2\Bigg)
\end{multline}
This optimization problem can be seen as a special case of \acrshort{mnl}, where the multiplicative interactions are treated as separate modalities. 
Therefore, in combination with \eqref{eq:tensor_fusion_basic}, Theorem \ref{DasTheorem} computes the relevance scores for all modalities and their interactions.

\subsubsection{What Can Go Wrong?}
In our experiments with synthetic multimodal datasets (Section \ref{sec:syn_data}), we found that relevance scores of higher-order interactions are greatly overestimated. Scores can be high even when no true interactions exist in the data. We call this phenomenon \textit{higher-order interaction bias}. The bias is caused by higher-order tensor products corresponding to very large function classes, which approximately include the function classes corresponding to lower-order tensors as subsets. 

Indeed, it is possible that a linear combination of the components of a tensor product learns the same functions as a linear combination of the individual-modality representations. For instance, consider two modalities $(m_u, m_v)$ and their representation vectors as $\mathbf{u}, \mathbf{v} \in \mathbb{R}^3$. Assume the first component of the learned representations is constant (e.g., 1), i.e.,  $\mathbf{u} = [1, u_2,u_3]^{\intercal}$ and $\mathbf{v} = [1, v_2, v_3]^{\intercal}$. In such a scenario, the linear combination $\alpha_1 u_2 + \alpha_2 v_2$ (for $\alpha_1, \alpha_2 \in \mathbb{R}$) can also be represented as $\alpha_1 (\mathbf{u} \otimes \mathbf{v})_{2,1} + \alpha_2 (\mathbf{u} \otimes \mathbf{v})_{1,2}.$ 
Using the \acrshort{mnl} algorithm, the relevance scores for modalities $m_u$ and $m_v$ are $\alpha_1$ and $\alpha_2$ respectively. 
However, the relevance score for the modality with tensor product ($m_{u \otimes v}$) is $\sqrt{(\alpha_1)^2+(\alpha_2)^2}$. Here, the $L^p$-norm regularization with any $p<2$ will favor the representation $m_{u \otimes v}$.
Therefore, if the dimensions of the modality representations $\mathbf{u}$ and $\mathbf{v}$ are strictly greater than required to represent the ground truth (which is usually the case in modern networks), lower-order functions will be preferably represented inside the higher-order products by learning a constant in the representations of each modality. 
Our experiments show that a trained network typically exhibits such behavior.
We propose a solution to this problem of higher-order interaction bias in the rest of this section.



\paragraph{Correct Bi-modal Interactions.}
We now address the problem of higher-order interaction bias. The key idea is to introduce a normalization scheme that downweights higher-order interactions. Our normalization scheme is a sophisticated generalization of the \acrfull{vbn} scheme described in equation~\ref{eq:vbn_eq}. Let $m_1$ and $m_2$ be two modalities and define their representations as $f^{m_1}$ and $f^{m_2}$. The representation of the bi-modal interaction is defined as $f^{\{m_1, m_2\}} = f^{m_1} \otimes f^{m_2}$. In this simple bi-modal case, our solution can be summarized as follows: we apply \acrshort{vbn} to $f^{m_1}$ and $f^{m_2}$ before taking the product, and finally apply \acrshort{vbn} again to the result. Formally, normalize each modality representation according to equation \ref{eq:vbn_eq} as:

$$\tilde{f}^{m}(x_i)=\frac{ f^m(x_i)-\mathbb{E}(f^m(x_i))   }     
{\sqrt{ \mathbb{E}\left( \|  f^m(x_i)-\mathbb{E}(f^m(x_i))       \|^2_2   \right) } } ,$$
then compute the tensor-product as:
$$\hat{f}^{\{m_1,m_2\}}(x_i)=\tilde{f}^{m_1}\otimes \tilde{f}^{m_2},$$
and similarly apply VBN to the tensor-product to obtain $\tilde{f}^{\{m_1,m_2\}}$.

The centering step of normalization is applied element-wise over a mini-batch. Thus, if a few components were to be non-zero constants in a mini-batch, they would become zero after the normalization. This ensures that $\tilde{f}^{m_1,m_2}$ cannot easily access lower degree information contained in $\tilde{f}^{m_1}$, because elements in $\tilde{f}^{m_2}$ cannot be a non-zero constant, and vice versa. The normalization could seemingly be trivially extended to more than two modalities by applying the normalization iteratively up to an n-order tensor product.
However, such an extension may still lead to a high-interaction bias. We illustrate why such a trivial extension may not work for more than two modalities and later generalize the normalization scheme for any number of modalities.

\paragraph{Generalization Over M-modal Interactions.}
Extending the aforementioned normalization to cases where $D > 2$ is not straightforward. This complexity arises because, when fusing more than two modalities, potentially, the representations of a subset of $M$ modalities conspire to produce a constant even though each individual modality representation is non-constant. For instance, consider three modalities with one-dimensional representations and apply \acrshort{vbn} to the representations  $f^{1}$, $f^{2}$, $f^{3}$, then to $f^{1,2}$, and finally to  $f^{1,2,3}=\tilde{f}^{1,2}\otimes \tilde{f}^{3}$, it is  still possible that representation $f^{1}$ is learned in the higher-order tensor product.
For instance, assume the components of $f^2$ and $f^3$ are learned to satisfy the following for each datapoint: 
(1) $f^{2}=f^{3}$;
(2) we have that $f^{2}$ is a Rademacher variable ($P[f^2=1]=P[f^2=-1]=0.5$);
and (3) $f^{2}$ is independent of $f^{1}$. 
Since $f^{1,2,3} = f^1f^2f^3$ and $f^{2}f^{3}=1$ for all datapoints, we actually have $f^{1,2,3}=f^{1}$, and this seemingly higher order combination can still recover the first modality. 

We address this issue by carefully normalizing the modalities features.
The key is to prevent the combination of features of one or more modalities from resulting in a constant value.
Similar to the bi-modal scenario, we need to ensure that the contribution of a subset of modalities to the larger fusion set is, on average, zero.
This guarantees that no constant value, other than zero, is multiplied by the product of the complement of that subset within the original fusion set.
Formally, for each $I\subset \{1,2,\ldots,M\}$, the centering step of our batch norm procedure is defined as follows, where we first assume each modality is one-dimensional to simplify the exposition:

\begin{align}
\label{eq:centering}
	\hat{f}^{I} &= \sum_{\ell=0}^{|I|}(-1)^{\ell} \sum_{\emptyset \neq S^1,\ldots,S^{\ell}\subset I \atop  S^1,\ldots,S^{\ell} \text{disjoint}} \prod_{m \in I \setminus (\cup S^{k})}f^m \nonumber \\ &\; \prod_{k\in \{ 1,2,\ldots, \ell\}}  \mathbb{E}\left(\prod_{m\in S^k}f^{m}\right).
\end{align}

When the modalities are multi-dimensional, the above operation is applied independently to each multi-index component~\footnote{In particular, the multivariate case could be expressed with a similar formula as equation~\ref{eq:centering} with the products replaced by outer tensor products, but this would require a different reordering of the components for each term of the sum.}. 
After preforming the centering step above for each multi-index component, we perform the generalized normalization step as follows

\begin{align}\tilde{f}^I=\frac{\hat{f}^{I}}{\sqrt{\E\|\hat{f}^{I}\|_{\Fr}^2     }   }.
\end{align}

While the solution can no longer be easily interpreted as a composition of standard batch norm operations, it is, in fact, possible to show that lower-order fusion can not be represented by a linear combination of their higher-order counterparts. 
Theorem \ref{original} formalizes this result. 
\begin{theorem}
	\label{original}
	The centering step described in \eqref{eq:centering} can be represented as the multi variate polynomial:  
	\begin{align*}
	\sum_{J\subset I}\mathcal{G}_{J} \prod_{m\in J} f^m ,
 \end{align*}
 for some real coefficients $\mathcal{G}_J$. Furthermore, the expected contribution of a subset of modalities $J$ in the fusion of the set of modalities $I$, where $J\subsetneq I$ is zero. That is, we have for any $J\subsetneq I  $ (including the empty set), 

	\begin{align}
 \label{eq:maintheorem}
		\mathbb{E}\left(    \sum_{K: J\subset K \subset I} \mathcal{G}_{K}  \prod_{m\in K\setminus J} f^{m}             \right)=0.
	\end{align}
 
\end{theorem}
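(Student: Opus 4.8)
The plan is to prove the two claims of Theorem~\ref{original} in turn, the structural claim (that the centering step is a polynomial $\sum_{J\subset I}\mathcal{G}_J\prod_{m\in J}f^m$) and then the vanishing-expectation identity~\eqref{eq:maintheorem}, deriving an explicit formula for the coefficients $\mathcal{G}_J$ along the way. First I would expand the right-hand side of~\eqref{eq:centering}: each term is indexed by a choice of $\ell\ge 0$ and an ordered tuple of nonempty pairwise-disjoint subsets $S^1,\dots,S^\ell\subset I$, and contributes $(-1)^\ell\big(\prod_{m\in I\setminus\bigcup_k S^k}f^m\big)\prod_k\mathbb{E}\big(\prod_{m\in S^k}f^m\big)$. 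Collecting terms by the value $J:=I\setminus\bigcup_k S^k$ of the "surviving" set of indices, the coefficient of $\prod_{m\in J}f^m$ becomes
\[
\mathcal{G}_J=\sum_{\ell\ge 0}(-1)^\ell\sum_{\substack{S^1,\dots,S^\ell \text{ nonempty, disjoint,}\\ \bigcup_k S^k = I\setminus J}}\prod_{k=1}^\ell \mathbb{E}\Big(\prod_{m\in S^k}f^m\Big),
\]
which is manifestly a real number depending only on the joint law of the $f^m$; this already establishes the first claim. It will be convenient to abbreviate $\mu(S):=\mathbb{E}(\prod_{m\in S}f^m)$ for $\emptyset\neq S\subset I$, so $\mathcal{G}_J=\sum_{\ell\ge0}(-1)^\ell\sum\prod_k\mu(S^k)$ where the inner sum is over ordered set partitions of $I\setminus J$ into nonempty blocks.

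The core of the argument is~\eqref{eq:maintheorem}. Fix $J\subsetneq I$ and set $A:=I\setminus J$ (nonempty). I would compute
\[
\mathbb{E}\Big(\sum_{K:\,J\subset K\subset I}\mathcal{G}_K\prod_{m\in K\setminus J}f^m\Big)
=\sum_{B\subset A}\mathcal{G}_{J\cup B}\,\mathbb{E}\Big(\prod_{m\in B}f^m\Big)
=\sum_{B\subset A}\mathcal{G}_{J\cup B}\,\mu^\star(B),
\]
where $\mu^\star(\emptyset):=1$ and $\mu^\star(B):=\mu(B)$ for $B\neq\emptyset$, and where I re-indexed $K=J\cup B$ with $B=K\setminus J\subset A$. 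Now $\mathcal{G}_{J\cup B}$ is the sum over ordered set partitions of $A\setminus B$ into nonempty blocks, with sign $(-1)^{\#\text{blocks}}$ and weight $\prod\mu(\text{block})$. So the whole expression is a sum, over all ways of writing $A$ as a disjoint union $B\sqcup S^1\sqcup\dots\sqcup S^\ell$ (with the $S^k$ nonempty and ordered, $B$ possibly empty), of $(-1)^\ell\mu^\star(B)\prod_k\mu(S^k)$. Grouping by the underlying unordered partition and treating $B$ as a distinguished (possibly empty) block, this is exactly the kind of alternating sum that telescopes to zero: for each fixed unordered partition of $A$ into $r\ge 1$ nonempty blocks $P_1,\dots,P_r$ (together with weight $\prod_j \mu(P_j)$), the combinatorial coefficient collected is $\sum_{j=1}^r\binom{?}{}\dots$ — more precisely, choosing which block plays the role of $B$ (or choosing $B=\emptyset$) and then ordering the remaining blocks, one gets $\sum_{t=0}^{r}(-1)^t\,\frac{r!}{(r-t)!}\cdot[\text{something}]$, and the standard identity $\sum_{t}(-1)^t\binom{r}{t}t! / \dots$ — I would phrase this cleanly via the exponential formula / the Möbius function of the partition lattice, for which $\sum$ of $(-1)^{|\pi|}\prod$ over refinements is the Möbius inversion witnessing that the "connected" part of a product of independent-looking factors vanishes. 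The cleanest route: recognize $\mathcal{G}_J$ as the coefficient extracted by Möbius inversion on the Boolean lattice of subsets of $A$, i.e. $\mathcal{G}_{J\cup B}$ and $\mu^\star$ are related by $\sum_{B\subset A}\mathcal{G}_{J\cup B}\mu^\star(B)=[A=\emptyset]$, which is false since $A\neq\emptyset$, hence the sum is $0$.

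The main obstacle I anticipate is bookkeeping: making the passage from the ordered-disjoint-subset sum defining $\mathcal{G}_J$ to a clean Möbius/partition-lattice statement rigorous, and in particular handling the distinguished empty-allowed block $B$ uniformly with the nonempty ordered blocks $S^k$. I would manage this by introducing, for a finite set $A$, the formal generating identity in a commutative algebra: $\prod$-style, define $P(A):=\sum_{\text{ordered set partitions }(S^1,\dots,S^\ell)\text{ of }A}(-1)^\ell\prod_k\mu(S^k)$ with $P(\emptyset)=1$, show $\sum_{B\subset A}\mu^\star(B)P(A\setminus B)=[A=\emptyset]$ by a one-line induction on $|A|$ (peel off the block containing a fixed element of $A$, or note it is the coefficient identity for $(1+x)\cdot(1+x)^{-1}$ at the level of the species of "a set equipped with a $\mu$-weighted subset" composed with its inverse), and finally observe $\mathcal{G}_{J\cup B}=P(A\setminus B)$ with $A=I\setminus J$. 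The multi-dimensional case reduces to the scalar case componentwise exactly as the footnote indicates, since~\eqref{eq:centering} is applied to each multi-index independently, so no new ideas are needed there. Appealing to Theorem~\ref{DasTheorem} is not required here; this theorem is purely about the algebra of the centering operator.
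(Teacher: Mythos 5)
Your proposal is correct, but it is organized differently from the paper's proof. The paper proves \eqref{eq:maintheorem} by induction on $|I|$, in two cases: for $J\neq\emptyset$ it observes that $\sum_{K:J\subset K\subset I}\mathcal{G}_K\prod_{m\in K\setminus J}f^m$ is precisely the centering operator applied to the smaller index set $I\setminus J$, so the induction hypothesis (applied with the empty subset of $I\setminus J$) gives zero expectation; for $J=\emptyset$ it expands $\mathbb{E}(\hat f^{I})$ directly and cancels the resulting sum by splitting terms according to whether the blocks $S^1,\ldots,S^\ell$ cover all of $I$. You instead route everything through a single convolution identity on the Boolean lattice: writing $A=I\setminus J$, $\mu^\star(B)=\mathbb{E}\bigl(\prod_{m\in B}f^m\bigr)$, and $P(C)$ for the signed sum over ordered partitions of $C$ into nonempty blocks, you reduce \eqref{eq:maintheorem} to $\sum_{B\subset A}\mu^\star(B)P(A\setminus B)=[A=\emptyset]$, and this follows in one step by peeling off the first block in the definition of $P$ (the rearrangement $P(A)=[A=\emptyset]-\sum_{\emptyset\neq S\subset A}\mu(S)P(A\setminus S)$ \emph{is} that identity, so not even the one-line induction you mention is needed). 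This treats all $J\subsetneq I$ uniformly, makes the paper's $J\neq\emptyset$ reduction unnecessary, and isolates the cancellation that the paper performs only in its $J=\emptyset$ case; the only cost is the bookkeeping identifying $\mathcal{G}_{J\cup B}$ with $P(A\setminus B)$, which is immediate from the explicit formula for $\mathcal{G}_J$ that you derive (and which matches the paper's). The first claim and the componentwise handling of the multivariate case coincide with the paper. One cosmetic remark: the digression in the middle of your sketch involving $\sum_t(-1)^t\,r!/(r-t)!$ with unspecified factors can be deleted outright, since the route you label as cleanest is already self-contained.
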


The theorem states that the expected value of the contribution of any subset $J \subsetneq I$ of modalities is zero in the fusion of $I$. Thus, $\tilde{f}^I$ (higher-order) can not learn a linear combination of the $\tilde{f}^J$ (lower-order).
Appendix B contains a comprehensive proof of the theorem.

To make the exposition clearer, we provide as an example the case $I=\{1,2,3\}$ and the individual representations $f^m$ are standardized using \acrshort{vbn}. In this case, we have, using the notation $\overline{f^1f^2}=\mathbb{E}(f^1f^2)$: 

\begin{align*}
    \hat{f}^{1,2,3}&=f^1\times f^2\times f^3-\overline{f^1\times f^2}\times f^3 -\overline{f^1\times f^3}\times f^2\\
&-\overline{f^2\times f^3}\times f^1-\overline{f^1\times f^2\times f^3}.
\end{align*}
An elaborated centering step, without normalizing the individual modality representations, and strictly following equation~\ref{eq:centering} is described in Appendix B.1.


In this section, we introduced \emph{\acrfull{ibn}}, a normalization scheme addressing higher-order interaction bias in multimodal learning. 
In the next section, we show the effectiveness of our method on synthetic and real-world datasets.

\section{Experiments}
First, we experiment on synthetic data, where we control the amount of relevant information in the modalities, and compare InTense's relevance scores to the established ground truth. 
Second, we compare the predictive performance of InTense with popular multimodal fusion methods on six real-world multimodal datasets.

\begin{figure}[b]
	\centering
	\includegraphics[width=0.8\columnwidth]{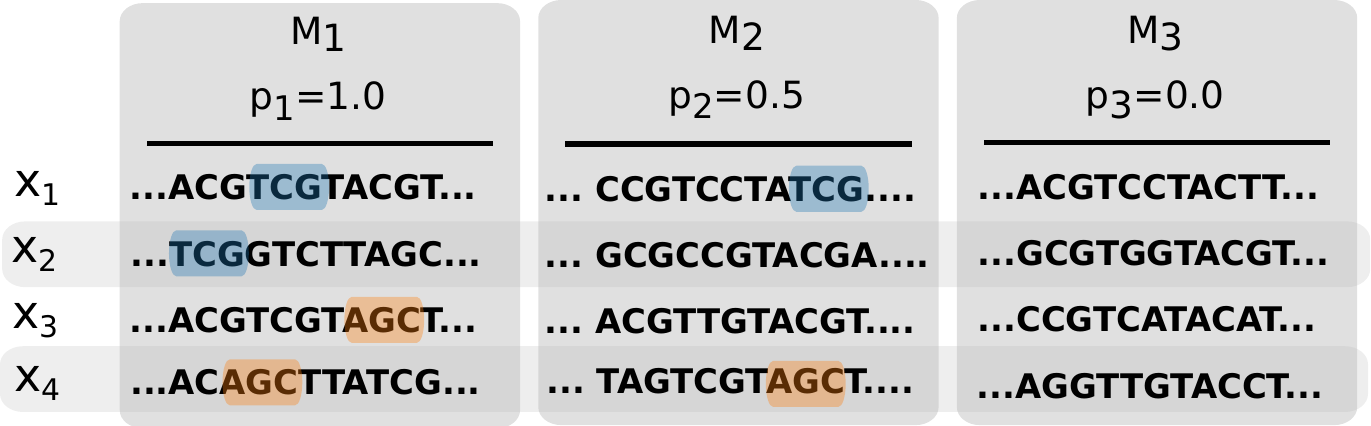}
	\caption{An excerpt of three modalities of \textsc{\acrlong{toydata}}, our self-curated binary classification dataset, where each sequence is made from a set of letters \{A,C,G,T\}. A positive class-sequence ``TCG" and a negative class-sequence ``AGC" is added according to the probability $p_m$.}
	\label{fig:data_glimpse}
\end{figure}
\subsection{Evaluating the Relevance Scores}
\label{sec:syn_data}
We created a multimodal dataset where each modality of a datapoint is a sequence of letters chosen randomly from a predefined set. 
For each datapoint $x$ and modality $m$, an informative subsequence is inserted at a random position with a probability of $p_m$.
We call our dataset \textsc{\acrlong{toydata}}. 
More details about it can be found in Appendix C.

We perform two experiments to determine the correctness of the relevance scores obtained from InTense. 
First, we construct a binary classification dataset with labels that ensure the modalities are independent and do not interact. 
Second, we generate another set of labels that can only be predicted using non-linear interactions among the modalities.

\subsubsection{InTense Assigns Correct Relevance Scores to \emph{Independent} Modalities}
In this set of experiments, we create a synthetic dataset with independent modalities (i.e., without interactions). As a baseline, we train one model for each modality and then compare the accuracies of those models with the relevance scores of InTense trained on all modalities together.

 \paragraph{Dataset.} We create the \textsc{\acrlong{toydata}} dataset, where for each modality $m$, each datapoint with a positive/negative class label contains a class-specific sequence with a probability $p_m$ independent of other modalities. A high value of $p_m$ indicates that most sequences in the modality $m$ contain a class-specific sequence. Thus, the higher the value of $p_m$, the more relevant the modality $m$ becomes. The labels for all datapoints are uniformly distributed between the two classes. \Figref{fig:data_glimpse} shows how probability $p_m$ affects modality relevance. We use $10$ modalities where the informative subsequence is inserted into modalities $M_2$, $M_4$, and $M_7$. There is no discriminative information present in other modalities. 

\paragraph{Results.} \Figref{fig:syn_data_mkl} shows the relevance scores calculated by InTense on \acrlong{toydata}. InTense assigns the correct relevance scores as they align with human intuition. The higher the probability $p_m$, the more informative signal is contained in modality $m$, and the higher the predicted relevance score. We further validate the correctness of InTense's interpretability by comparing it with the accuracies obtained from unimodal models trained on each modality separately. Again, InTense's relevance scores correlate with the unimodal accuracies.

\begin{figure}
	\centering
	\includegraphics[width=0.9\columnwidth]{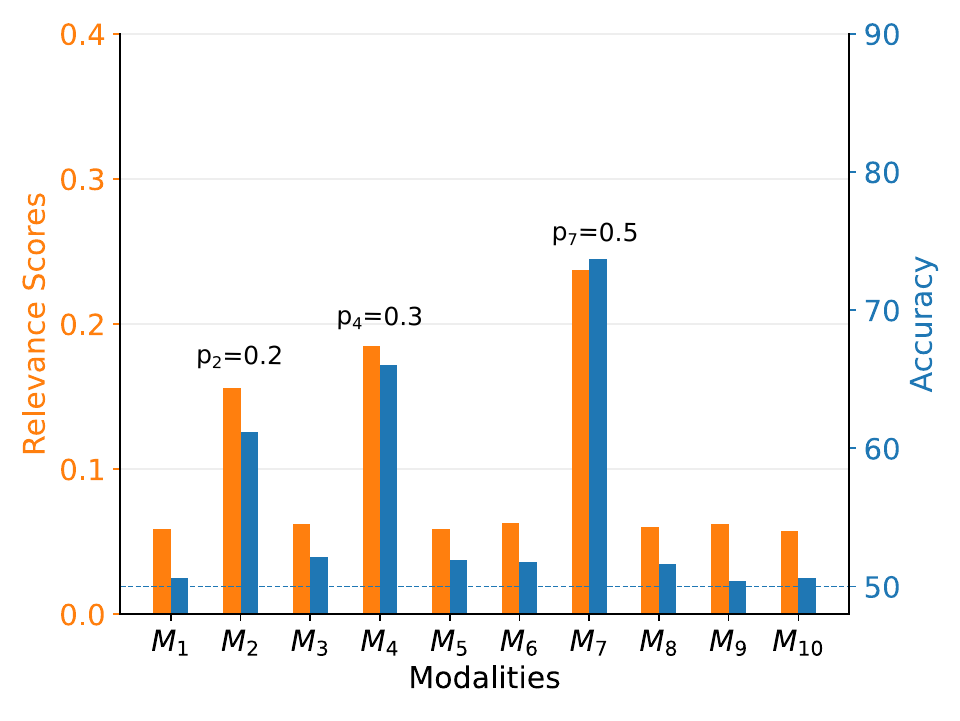}
	\caption{The figure shows a high correlation of InTense' relevance scores and accuracies of unimodal models on \acrlong{toydata}. The modalities $M_2, M_4, M_7$ achieve high relevance scores and high accuracy as they contain class-specific information. Other modalities contain no class-specific information, which leads to a very low relevance score and an accuracy of around $50\%$  (equivalent to random guessing).}
	\label{fig:syn_data_mkl}
\end{figure}

\subsubsection{InTense Assigns Correct Relevance Scores to \emph{Interacting} Modalities} \label{sec:interaction_scores}
We now turn to a situation where the label depends on a non-linear interaction among the modalities by design. 

\paragraph{Dataset.} We also create the \textsc{\acrlong{toydata}-tri} dataset, a trimodal version of \textsc{\acrlong{toydata}}. However, this time, the informative subsequence is not class-specific. Instead, the label is defined by an exclusive-or (XOR) relationship between the first two modalities ($M_1$ and $M_2$). The label is $0$ if both modalities contain the subsequence or none of them does, and the label is $1$ otherwise (i.e., when one of the modalities contains the subsequence).
Note that modality $M_3$ does not contain any informative subsequence; thus, it is irrelevant.
As before, we generate a balanced dataset with 50\% of the samples being positive and 50\% negative. 


\paragraph{Results.} 
\Figref{fig:mkl_iter_bn} shows the results. 
We observe that the global relevance scores calculated by the MultiRoute baseline~\cite{tsai2020multimodal} are biased toward high-order interactions. This occurs even when the interactions do not add any useful information and thus should have been discarded by MultiRoute. We further see that the proposed InTense method does avoid this bias and correctly assigns a high relevance score solely to the interaction of $M_1$ and $M_2$. This shows that InTense can ensure the correctness of relevance scores even when higher-order interactions are involved.


\begin{figure}
	\centering
	\includegraphics[width=0.85\columnwidth]{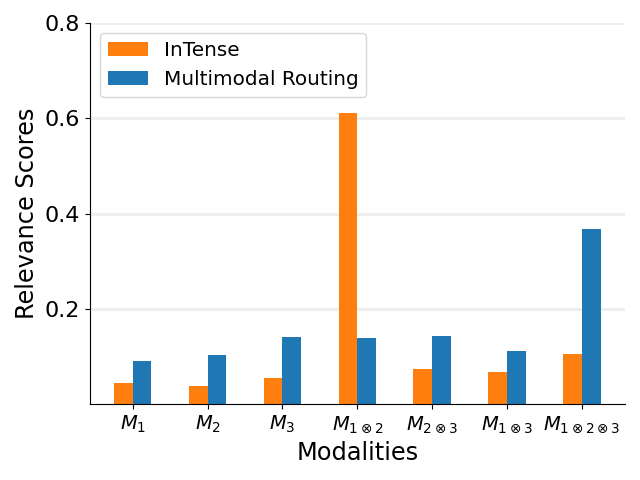}
	\caption{Illustration of the relevance scores calculated by the proposed InTense and the MultiRoute baseline when higher-order modality interactions are involved in the ground truth. MultiRoute leads to biased results (blue bars), where the relevance scores are concentrated toward higher-order interactions $M_{1\otimes2\otimes3}$. In contrast, InTense (orange bars) correctly assigns a high relevance score only to the interaction $M_{1\otimes2}$, which contains all class-specific signals.}
	\label{fig:mkl_iter_bn}
\end{figure}

\subsection{InTense Performs SOTA in Real-World Applications}

We demonstrate the effectiveness of InTense in providing interpretability without compromising predictive performance across a range of real-world applications. 
In order to compare performance and ensure reproducibility, we followed the experimental setup (e.g., data preprocessing, encodings of different modalities) of the MultiBench~\cite{liang2021multibench} benchmark for all the experiments. 

\paragraph{Sentiment analysis.}
In sentiment analysis, also known as opinion mining, the target is to identify the emotional tone or feeling underlying the data. Initially confined to text data, sentiment analysis has evolved to encompass multiple modalities. The task becomes challenging due to the intricate interactions of modalities, which play a significant role in expressing sentiments. Understanding sentiments is crucial in business intelligence, customer feedback analysis, and social media monitoring. To evaluate InTense in sentiment analysis, we employed CMU-MOSEI~\cite{bager-zadeh-etal-2018-multimodal}, the largest dataset of sentence-level sentiment analysis for real-world online videos, and CMU-MOSI~\cite{zadeh2016mosi}, a collection of annotated opinion video clips.

\paragraph{Humor and Sarcasm detection.}
Humor detectors identify elements that evoke amusement or comedy, while sarcasm detection aims to discern whether a sentence is presented in a sarcastic or sincere manner. Sarcasm and humor are often situational. Successfully detecting them requires a comprehensive understanding of various information sources, encompassing the utterance, contextual intricacies of the conversation, and background of the involved entities. As this information extends beyond textual cues, the challenge lies in learning the complex interactions among the available modalities. To assess our approach's effectiveness in these tasks, we utilized UR-FUNNY~\cite{hasan2019ur} for humor detection and MUStARD~\cite{castro2019multimodal} for sarcasm detection.

\paragraph{Layout Design Categorization.}
Layout design categorization is about classifying graphical user interfaces into predefined categories. 
Automizing this task can support designers in optimizing the arrangement of interactive elements, ensuring the creation of interfaces that are not only visually appealing but also functional and user-centric.
Classifiers can, e.g., assign semantic captions to elements, enable smart tutorials, or be the foundation for advanced search engines.
For this paper, we considered the ENRICO~\cite{leiva2020enrico} dataset as an example for layout design categorization.
ENRICO comprises 20 design categories and 1460 user interfaces with five modalities, including screenshots, wireframe images, semantic annotations, DOM-like tree structures, and app metadata.

\paragraph{Digit Recognition.}
We also include results for Audiovision-MNIST (AV-MNIST)~\cite{vielzeuf2018centralnet}, a multimodal dataset comprising images of handwritten and recordings of spoken digits.
Despite its apparent lack of immediate real-world application, the dataset's significance lies in its establishment as a standard multimodal benchmark. 
It allows us to situate our research within the broader context of previous research~\cite{perez2019mfas,lecun1998gradient}.

\begin{table}
\centering
    \begin{tabular}{lcccl}\toprule
        &\multicolumn{2}{c}{Baselines} & \multicolumn{2}{c}{Ours} \\
        \cmidrule(lr){2-3}\cmidrule(lr){4-5}
         {} & MultiRoute & MRO & MNL & InTense \\ 
        \midrule
        MUStARD & $65.9$ & $66.5$ & $67.4$ & $\mathbf{69.6}$ \\
        CMU-MOSI & $76.8$ & $75.8$ & $\mathbf{80.8}$ & $79.7$ \\
        UR-FUNNY & $63.6$ & $63.4$ & $63.4$& $\mathbf{65.1}$\\
        CMU-MOSEI & $80.2$ & $79.7$ & $80.5$ & $\mathbf{81.5}$\\
        AV-MNIST &71.8&72.0&72.4&\textbf{72.8}\\
        ENRICO &46.7&49.2&47.1&\textbf{50.8}\\
        \bottomrule
    \end{tabular}
     \caption{Accuracies for different baselines on the test fold. Each experiment is carried out ten times to compute the statistics.}
    \label{tab:performance}
\end{table}

\begin{figure}[t] 
	\centering
        \includegraphics[width=\columnwidth]{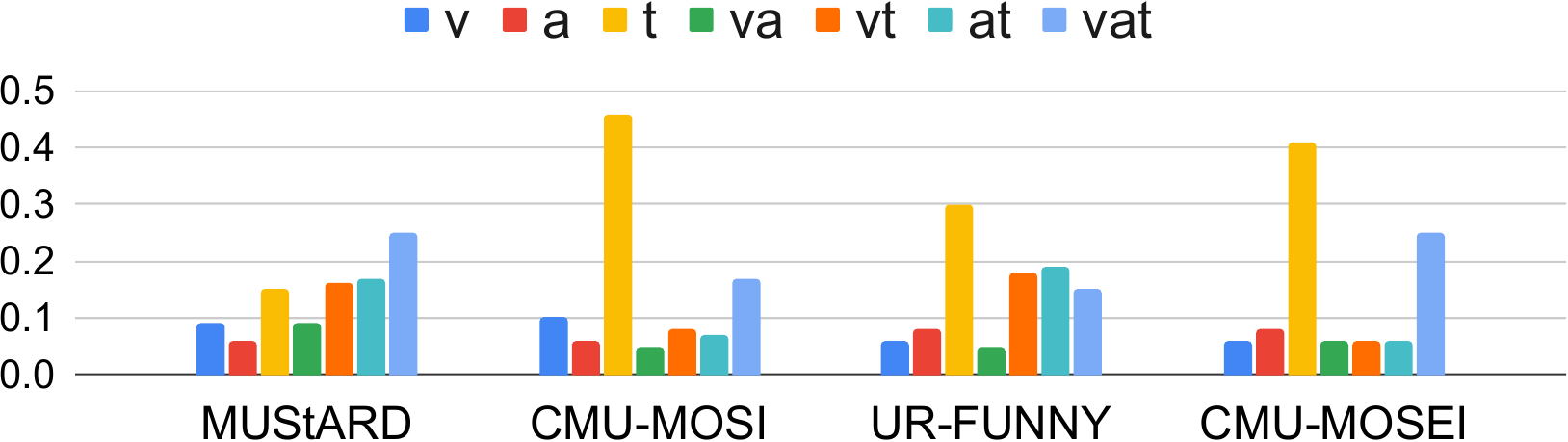}
     \caption{Relevance scores from InTense for audio (a), vision (v), text (t), and all their possible interactions.
     } 
     \label{fig:relevance}
\end{figure}

\paragraph{Baselines.} 
We compare the classification performance of \textit{InTense} and \acrshort{mnl} to the following state-of-the-art interpretable multimodal learning baselines: 1) Multimodal Residual Optimization (MRO)~\cite{wortwein2022beyond} and 2) Multimodal Routing (MultiRoute) ~\cite{tsai2020multimodal}. Additionally, we consider three non-interpretable baselines: 1) \textit{LF-Concat}, 2) \textit{TF Network}~\cite{zadeh2017tensor}, and 3) multimodal transformer (\textit{MulT})~\cite{tsai2019multimodal}. These baselines have been identified as leading in the independent comparison conducted by~\citeauthor{liang2021multibench}~\shortcite{liang2021multibench}. The multimodal transformer was particularly highlighted for consistently reaching some of the highest accuracy levels. 


\paragraph{Results.} 
The results are shown in Table~\ref{tab:performance}. We observe that our proposed models, MNL and InTense, surpass all interpretable baselines in terms of classification accuracy. InTense achieves the highest performance across all datasets except for CMU-MOSI, where MNL excels. CMU-MOSI is the smallest dataset in our analysis, a factor that may contribute positively to MNL's performance. Compared with non-interpretable multimodal learning methods (see Table 1 in Appendix E), MNL and especially InTense demonstrate impressive performance, almost meeting the accuracy of the non-interpretable Multimodal Transformer (MulT). The performance of our proposed models is within a narrow 2\% error margin (and frequently much lower) compared to the MulT baseline.


Figure~\ref{fig:relevance} shows the interpretable relevance scores that InTense assigns to the various modalities. Notably, in three of the four datasets analyzed, text emerges as the most significant modality. We identify two plausible explanations for this phenomenon. First, several studies have reported a strong correlation of text with sentiment~\cite{gat2021perceptual}. Second, the predominance of the text modality may be attributed to the availability of sophisticated word embeddings obtained from large pre-trained foundation models. However, we find an exception in the interpretability scores for the sarcasm detection dataset (MUStARD). Sarcasm detection requires information from multiple modalities, making sole reliance on one, especially text, insufficient for accuracy.

\section{Conclusion} 
We introduced InTense, a novel interpretable approach to multimodal learning that offers reliable relevance scores for modalities and their interactions. InTense achieves state-of-the-art performance in several challenging applications, from sentiment analysis and humor detection to layout design categorization and multimedia. We proved theoretically and validated empirically that InTense correctly disentangles higher-order interactions from the individual effects of each modality. The full transparency of InTense makes it suitable for future application in safety-critical domains.

\section{Broader Impact}
As an interpretable approach, the proposed methodology naturally aids in making multimodal learning more transparent. 
By attributing importance scores to different modalities and their interactions, InTense may reveal biases in decision-making and improve trustworthiness.
For instance, consider a system tasked with classifying loan suitability.
Our approach may expose social biases when relevance scores for certain modalities, such as gender extracted from vision, are disproportionately high. Moreover, unlike existing approaches, InTense has no higher-order interaction bias (see Section \ref{sec:intense}). That is, it does not incorrectly assign large relevance scores to higher-order interactions, which can create the false impression of a social bias. The full transparency of InTense prevents the deployment of a harmful classification model, contributing to the ethical use of AI in sensitive domains.

\section{Acknowledgement}
SV, PL, WM, and MK acknowledge support by the Carl-Zeiss Foundation, the DFG
awards KL 2698/2-1, KL 2698/5-1, KL 2698/6-1 and KL 2698/7-1, and the BMBF awards 03\textbar B0770E, and 01\textbar S21010C.

\bibliographystyle{named}
\bibliography{ijcai24}
\clearpage
\appendix
\title{Appendix}
\maketitle
\section{Proof of Theorem \ref{DasTheorem}}
We begin with a few simple optimization lemmas. 

\begin{lemma}
	\label{lemmaone}
	Let $u>0$, we have 
	\begin{align}
	\argmin_{x>0}\frac{u}{x}+\frac{\lambda}{p}x^p&=(\frac{u}{\lambda})^\frac{1}{p+1}\\
		\min_{x>0}\frac{u}{x}+\frac{\lambda}{p}x^p&=\frac{p+1}{p}u^{\frac{p}{p+1}}\lambda^{\frac{1}{p+1}}.
	\end{align}
	\end{lemma}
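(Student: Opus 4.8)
The plan is to prove Lemma~\ref{lemmaone} by elementary single-variable calculus, since the objective $g(x) = \frac{u}{x} + \frac{\lambda}{p}x^p$ is smooth and strictly convex on $(0,\infty)$ for $u,\lambda>0$ and $p\geq 1$. First I would compute the derivative $g'(x) = -\frac{u}{x^2} + \lambda x^{p-1}$ and set it to zero, which gives $\lambda x^{p-1} = \frac{u}{x^2}$, i.e.\ $x^{p+1} = \frac{u}{\lambda}$, hence the unique critical point $x^\ast = \left(\frac{u}{\lambda}\right)^{\frac{1}{p+1}}$. I would then note that $g''(x) = \frac{2u}{x^3} + \lambda(p-1)x^{p-2} > 0$ on $(0,\infty)$ (or simply that $g$ is a sum of convex functions and $\to\infty$ at both endpoints $x\to 0^+$ and $x\to\infty$), so this critical point is indeed the global minimizer, establishing the first identity.

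For the second identity I would substitute $x^\ast$ back into $g$. Writing $x^\ast = (u/\lambda)^{1/(p+1)}$, the first term is $\frac{u}{x^\ast} = u \cdot (u/\lambda)^{-1/(p+1)} = u^{1 - 1/(p+1)}\lambda^{1/(p+1)} = u^{p/(p+1)}\lambda^{1/(p+1)}$. The second term is $\frac{\lambda}{p}(x^\ast)^p = \frac{\lambda}{p}(u/\lambda)^{p/(p+1)} = \frac{1}{p}\lambda^{1 - p/(p+1)}u^{p/(p+1)} = \frac{1}{p}\lambda^{1/(p+1)}u^{p/(p+1)}$. Adding these gives $\left(1 + \frac{1}{p}\right)u^{p/(p+1)}\lambda^{1/(p+1)} = \frac{p+1}{p}u^{p/(p+1)}\lambda^{1/(p+1)}$, which is exactly the claimed minimum value.

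There is essentially no serious obstacle here; the lemma is a routine optimization fact. The only things to be careful about are bookkeeping of the exponents (it is easy to slip a sign in $1 - \frac{1}{p+1} = \frac{p}{p+1}$ and similar manipulations) and briefly justifying why the critical point is a global rather than merely local minimum — for which convexity of $g$ (or its behavior at the boundary of $(0,\infty)$) suffices. I would state the convexity justification in one sentence and then present the substitution computation cleanly.
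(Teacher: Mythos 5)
Your proposal is correct and matches the paper's own proof almost exactly: both compute $g'(x)=-\frac{u}{x^2}+\lambda x^{p-1}$, identify the unique critical point $x^\ast=(u/\lambda)^{1/(p+1)}$, and substitute back to obtain $\frac{p+1}{p}u^{p/(p+1)}\lambda^{1/(p+1)}$. The only cosmetic difference is that the paper justifies global minimality via the sign change of $g'$ around $x^\ast$ rather than your convexity/second-derivative argument; both are equally valid.
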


\begin{proof}
	Let 
$$f(u,x)=\frac{u}{x}+\frac{\lambda}{p}x^p, \text{we have}$$ 
$$\frac{\partial f}{\partial x}=-\frac{u}{x^2}+\lambda x^{p-1},$$ 
which cancels at $x^*=(\frac{u}{\lambda})^{\frac{1}{p+1}}$. Note also that $\frac{\partial f}{\partial x}\leq 0$ for $x\leq x^*$ and $\frac{\partial f}{\partial x}\geq 0$ for $x\geq x^*$, guaranteeing a minimum. 
		Substituting back into the definition of $f$, we obtain
  $$f(x,u^*)=u^{\frac{p}{p+1}}\lambda^{\frac{1}{p+1}}+\frac{\lambda}{p}\frac{u^{\frac{p}{p+1}}}{\lambda^{\frac{p}{p+1}}}=\frac{p+1}{p}u^{\frac{p}{p+1}}\lambda^{\frac{1}{p+1}}$$
\end{proof}

\begin{lemma}
	\label{lemmatwo}
	Let $A,u,B>0$ with $u<1$, we have 
	\begin{align}
	\argmax_{x>0}Ax^u-Bx&=(\frac{Au}{B})^{\frac{1}{1-u}}\\
	\max_{x>0}Ax^u-Bx&=A(\frac{Au}{B})^{\frac{u}{1-u}}-B(\frac{Au}{B})^{\frac{1}{1-u}}.
	\end{align}
\end{lemma}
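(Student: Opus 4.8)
\textbf{Proof plan for Lemma~\ref{lemmatwo}.}

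The plan is to proceed exactly as in Lemma~\ref{lemmaone}: reduce the maximization to a first-order stationarity condition, check the second-order (or monotonicity) condition to confirm it is a genuine maximum, and then substitute back. First I would set $g(x) = Ax^u - Bx$ for $x > 0$ and compute $g'(x) = Aux^{u-1} - B$. Setting $g'(x) = 0$ gives $x^{u-1} = \frac{B}{Au}$, hence $x^{1-u} = \frac{Au}{B}$, which yields the claimed critical point $x^* = \left(\frac{Au}{B}\right)^{\frac{1}{1-u}}$; note this is well-defined and positive since $A, u, B > 0$.

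Next I would verify that $x^*$ is a maximizer. Since $0 < u < 1$, the exponent $u - 1$ is negative, so $x \mapsto Aux^{u-1}$ is strictly decreasing on $(0,\infty)$; therefore $g'(x) > 0$ for $x < x^*$ and $g'(x) < 0$ for $x > x^*$, which shows $g$ increases then decreases and attains its global maximum at $x^*$. (Equivalently, $g''(x) = Au(u-1)x^{u-2} < 0$ everywhere.) Finally, substituting $x^*$ into $g$ gives $g(x^*) = A(x^*)^u - Bx^* = A\left(\frac{Au}{B}\right)^{\frac{u}{1-u}} - B\left(\frac{Au}{B}\right)^{\frac{1}{1-u}}$, which is the stated value of the maximum.

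There is no real obstacle here; the only mild subtlety is making sure the exponents are manipulated correctly (the relation $x^{u-1} = B/(Au) \iff x^{1-u} = Au/B$ and raising to the power $\frac{1}{1-u}$, which is valid and sign-consistent precisely because $1 - u > 0$). One could optionally simplify $g(x^*)$ further by factoring out $\left(\frac{Au}{B}\right)^{\frac{u}{1-u}}$ to write it as $\left(\frac{Au}{B}\right)^{\frac{u}{1-u}}\!\left(A - B\cdot\frac{Au}{B}\right) = A(1-u)\left(\frac{Au}{B}\right)^{\frac{u}{1-u}}$, but the unsimplified form in the statement already suffices for the downstream use in the proof of Theorem~\ref{DasTheorem}.
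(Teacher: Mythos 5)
Your proposal is correct and follows essentially the same route as the paper: compute $g'(x)=Aux^{u-1}-B$, solve for the critical point $x^*=(Au/B)^{1/(1-u)}$, and substitute back. The only difference is that you explicitly verify concavity/monotonicity of $g'$ to confirm a global maximum, a check the paper's proof omits; this is a welcome addition rather than a deviation.
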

\begin{proof}
	Let $f(A,B,u,x)=Ax^u-Bx$. We have $$\frac{\partial f}{\partial x}=Aux^{u-1}-B.$$ This derivative cancels at $x=x^*:=(\frac{Au}{B})^{\frac{1}{1-u}}$ as expected. Substituting back into the definition of $f$, we obtain $$f(A,B,u,x^*)=A(\frac{Au}{B})^{\frac{u}{1-u}}-B(\frac{Au}{B})^{\frac{1}{1-u}},$$ as expected.
\end{proof}

We can now write the main step in the proof of Theorem~\ref{DasTheorem}:
\begin{prop}
	\label{Theprop}
	Let $A_1,A_2,\ldots,A_m>0$, we have 
	\begin{align}
	\min_{\beta>0,\|\beta\|_p^p\leq 1}\sum_{m=1}^M \frac{A_m}{\beta_m}=(\sum_{m=1}^M A_m^{\frac{p}{p+1}})^{\frac{p+1}{p}},
	\end{align}
	and the $\beta_m$'s which satisfy the above minimization problem are given by 
		\begin{align}
\beta_m= \frac{         A_m^{\frac{1}{p+1}}       }{    (\sum_{m=1}^M A_m^{\frac{p}{p+1}})^{\frac{1}{p}}         }.
		\end{align}
	\end{prop}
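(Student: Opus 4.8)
The plan is to prove both assertions of Proposition~\ref{Theprop} by a short weak-duality argument built on Lemmas~\ref{lemmaone} and~\ref{lemmatwo}, followed by a direct verification that the claimed $\beta$ attains the bound. First I would note that, for any $\beta$ feasible for the problem (so $\sum_m\beta_m^p\le 1$) and any multiplier $\lambda>0$, the term $\tfrac{\lambda}{p}\bigl(\sum_m\beta_m^p-1\bigr)$ is nonpositive, hence
\[
\sum_{m=1}^M \frac{A_m}{\beta_m}\;\ge\;\sum_{m=1}^M \frac{A_m}{\beta_m}+\frac{\lambda}{p}\Bigl(\sum_{m=1}^M\beta_m^p-1\Bigr)\;\ge\;\sum_{m=1}^M\min_{x>0}\Bigl(\frac{A_m}{x}+\frac{\lambda}{p}x^p\Bigr)-\frac{\lambda}{p},
\]
where the last step uses that the sum separates over $m$. (It is also worth observing that the constraint is active at any minimizer, since each $A_m/\beta_m$ is strictly decreasing in $\beta_m$; this is not needed for the lower bound but clarifies the equality case.)

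Next I would evaluate the inner minima using Lemma~\ref{lemmaone} with $u=A_m$, obtaining $\min_{x>0}\bigl(\tfrac{A_m}{x}+\tfrac{\lambda}{p}x^p\bigr)=\tfrac{p+1}{p}A_m^{p/(p+1)}\lambda^{1/(p+1)}$. Summing over $m$ gives, for every $\lambda>0$,
\[
\sum_{m=1}^M\frac{A_m}{\beta_m}\;\ge\;\frac{p+1}{p}\,\lambda^{\frac{1}{p+1}}\sum_{m=1}^M A_m^{\frac{p}{p+1}}\;-\;\frac{\lambda}{p}\;=:\;g(\lambda).
\]
The function $g$ has exactly the form $A x^u-Bx$ treated in Lemma~\ref{lemmatwo}, with $u=\tfrac{1}{p+1}\in(0,1)$, $A=\tfrac{p+1}{p}\sum_m A_m^{p/(p+1)}$, $B=\tfrac1p$, and $x=\lambda$. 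Applying that lemma, the supremum of $g$ over $\lambda>0$ is attained at $\lambda^\ast=\bigl(\sum_m A_m^{p/(p+1)}\bigr)^{(p+1)/p}$, and after collecting the fractional exponents (the cancellation $\tfrac{p+1}{p}-\tfrac1p=1$ appears here) it equals $\bigl(\sum_m A_m^{p/(p+1)}\bigr)^{(p+1)/p}$. Thus $\min_\beta\sum_m A_m/\beta_m\ge\bigl(\sum_m A_m^{p/(p+1)}\bigr)^{(p+1)/p}$.

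To conclude, I would verify that the proposed point $\beta_m^\ast=A_m^{1/(p+1)}\big/\bigl(\sum_{\tilde m}A_{\tilde m}^{p/(p+1)}\bigr)^{1/p}$ is feasible --- indeed $\sum_m(\beta_m^\ast)^p=\bigl(\sum_m A_m^{p/(p+1)}\bigr)\big/\bigl(\sum_m A_m^{p/(p+1)}\bigr)=1$ --- and that substituting it into the objective yields $\sum_m A_m/\beta_m^\ast=\bigl(\sum_m A_m^{p/(p+1)}\bigr)^{(p+1)/p}$ by a one-line exponent computation, matching the lower bound. Hence $\beta^\ast$ is a minimizer with the stated optimal value; equivalently, $\beta_m^\ast$ is precisely the minimizing $x$ supplied by Lemma~\ref{lemmaone} evaluated at $\lambda=\lambda^\ast$, which makes the equality case transparent.

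I do not expect a genuine obstacle: this is a textbook Lagrangian/H\"older calculation, and the only care needed is (i) running the lower-bound chain over the whole feasible set (using $\sum_m\beta_m^p\le 1$, not merely the boundary) so weak duality applies, and (ii) bookkeeping the exponents $\tfrac{p}{p+1}$, $\tfrac{1}{p+1}$, $\tfrac{p+1}{p}$ through Lemmas~\ref{lemmaone} and~\ref{lemmatwo}. An alternative one-shot route is to prove the lower bound directly by H\"older's inequality applied to $\sum_m A_m^{p/(p+1)}=\sum_m (A_m/\beta_m)^{p/(p+1)}\beta_m^{p/(p+1)}$ with conjugate exponents $\tfrac{p+1}{p}$ and $p+1$; but since the lemmas are already in place, routing through them is the most economical.
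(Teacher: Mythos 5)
Your proposal is correct and follows essentially the same route as the paper's proof: form the Lagrangian, minimize componentwise via Lemma~\ref{lemmaone}, maximize the resulting dual over $\lambda$ via Lemma~\ref{lemmatwo}, and confirm the claimed $\beta$ attains the bound (the paper phrases this last step as checking complementary slackness, whereas you verify feasibility and attainment directly --- a minor, if slightly cleaner, presentational difference). The exponent bookkeeping and the optimal $\lambda^\ast=\bigl(\sum_m A_m^{p/(p+1)}\bigr)^{(p+1)/p}$ all match.
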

\begin{proof}
    Let us write $f(A,B)=\sum_{m=1}^M \frac{A_m}{\beta_m}$. The Lagrangian is $\mathcal{L}(\lambda,f,\beta)=\sum_{m=1}^M  \frac{A_m}{\beta_m} +\frac{\lambda}{p}(\|\beta\|_p^p-1)$.
    By Lemma~\ref{lemmaone}, optimizing over each component of $\beta$ independently, we have 
    \begin{align}
    \min_\beta \mathcal{L}(\lambda,f,\beta)=\sum_{m=1}^M \frac{p+1}{p}A_m^{\frac{p}{p+1}}\lambda^{\frac{1}{p+1}}-\frac{\lambda}{p},
    \end{align}
    realized for $\beta_m=(\frac{A_m}{\lambda})^{\frac{1}{p+1}}$.
    Hence, the dual problem is 
    \begin{align}
    \max_\lambda \sum_{m=1}^M \frac{p+1}{p}A_m^{\frac{p}{p+1}}\lambda^{\frac{1}{p+1}}-\frac{\lambda}{p}.
    \end{align}
    By Lemma~\ref{lemmatwo}, we obtain
            \begin{align}
    &\max_\lambda \sum_{m=1}^M \frac{p+1}{p}A_m^{\frac{p}{p+1}}\lambda^{\frac{1}{p+1}}-\frac{\lambda}{p}\\
    &=\sum_{m=1}^M A_m^{\frac{p}{p+1}}\frac{p+1}{p}(\sum_{m=1}^M A_m^{\frac{p}{p+1}})^{\frac{1}{p}}-\frac{(\sum_{m=1}^M A_m^{\frac{p}{p+1}})^{\frac{p+1}{p}}}{p}\\
    &=(\sum_{m=1}^M A_m^{\frac{p}{p+1}})^{\frac{p+1}{p}},
    \end{align}
    realized for $\lambda=(\sum_{m=1}^M A_m^{\frac{p}{p+1}})^{\frac{p+1}{p}}$.
    
    Note that for $m=1,2,\ldots,M$, 
    \begin{align}
    \label{TheBeta}
    \beta_m=(\frac{A_m}{\lambda})^{\frac{1}{p+1}}=\frac{A_m^{\frac{1}{p+1}}}{(\sum_{m=1}^M A_m^{\frac{p}{p+1}})^{\frac{1}{p}}}.
    \end{align}
    
    Finally, note that 
    \begin{align*}
    \sum_{m=1}^M \beta_m^p&=\frac{\sum_{m=1}^M A_m^{\frac{p}{p+1}}}{(\sum_{m=1}^M A_m^{\frac{p}{p+1}})^{\frac{p}{p}}}=1.
    \end{align*}
    Hence, complementary slackness is satisfied, and we have strong duality.
    \end{proof}

We can now proceed with the 
\begin{proof}[Proof of theorem~\ref{DasTheorem}]
    
First of all, similarly to~\cite{kloft2011lp} and Lemma 26 in~\cite{micchelli2005learning} and  we apply the following substitution: $w^m_L \leftarrow \sqrt{\beta_m} w^m_L$, which turns the problem into the following: 
	

\begin{multline}
\label{Thiss}
\minimize_{\substack{w_L^1, \cdots ,w_L^M, \beta, \\ W^1,W^2, \cdots, W^M,\\ \beta\in \sR^M,\beta\geq 0, \|\beta\|_p\leq 1}}\Bigg(\sum_{i=1}^n \ell\Big(\sum_{m=1}^M \langle w_L^m,f^{m}(x_i^m)\rangle
+b,y_i\Big)\\
+\Lambda \sum_{l=1}^{L-1}\sum_{m=1}^M \|w^m_l\|_2^2
+\Lambda\sum_{m=1}^M \frac{\|w^m_L\|_2^2}{\beta_m}\Bigg).
\end{multline}

The minimization problem from equation~\ref{Thiss} can be then solved for $\beta$ independently, removing the dependence on $\beta$: simply apply Proposition~\ref{Theprop} for $A_m=\|\omega_L^m\|_2^2\Lambda$ to yield the equivalence to 

\begin{align}
\label{eq:deepmkl}
\minimize_{\substack{w_L^1,w_L^2,\ldots,w_L^M, \\ W^1,W^2, \cdots, W^M}}\sum_{i=1}^n \ell\left(\sum_{m=1}^M  \langle w_L^m,f^{m}(x^m_i)\rangle +b,y_i\right) \nonumber \\
+ \Lambda \sum_{l=1}^{L-1}\sum_{m=1}^M \|w^m_l\|_2^2+\Lambda\left(\sum_{m=1}^M \|w^m_L\|_2^q\right)^{\frac{2}{q}},
\end{align}
where $q=\frac{2p}{p+1}$, as expected.

\end{proof}

	\section{Proof of Theorem~2}
	  \begin{proof}

   The first statement is a straightforward consequence of equation \ref{eq:centering}, collecting all the terms involving the same products of components: indeed, we can just write 
   \begin{align}
    \mathcal{G}_J=\sum_{\ell=0}^{|I|}(-1)^{\ell}\hspace{-1em} \sum_{\emptyset \neq S^1,\ldots,S^{\ell}\subset I \atop \substack{ S^1,\ldots,S^{\ell} \text{disjoint} \\ \bigcup_{k\in \{ 1,2,\ldots, \ell\}} S^k =I\setminus J}} \hspace{-0.5em}\prod_{k\in \{ 1,2,\ldots, \ell\}}  \mathbb{E}\left(\prod_{m\in S^k}f^{m}\right).
   \end{align}
The proof of the Equation~\ref{eq:maintheorem} is by induction over the cardinality of $I$.

\paragraph{Initial case.}
If $|I|=1$, we have $\Psi(f)=f-\mathbb{E}(f)$. The only admissible $J$ is the empty set, and we indeed have $$\mathbb{E}\left(    \sum_{K: J\subset K \subset I} \mathcal{G}_{K}  \prod_{m\in K\setminus J}f^{m}             \right)=\mathbb{E}\left(f-\mathbb{E}(f)\right)=0.$$

\paragraph{Induction step.}
    
    We suppose that the result holds for all cardinalities strictly less than $|I|$. 
    Let us suppose first that $J\neq \emptyset$.
    Note that by the construction of $\Psi$, we have immediately 
    \begin{align*}
    \sum_{K: J\subset K \subset I} \mathcal{G}_{K}  \prod_{m\in K\setminus J}f^{m}           = \Psi_{I\setminus J}(f).
    \end{align*}
    Since $J\neq \emptyset$, we have $|I\setminus J|<|I|$, and by the induction hypothesis, $\mathbb{E} \left( \Psi_{I\setminus J}(f)\right) =0$. Hence 
       \begin{align*}
    \sum_{K: J\subset K \subset I} \mathcal{G}_{K}  \prod_{m\in K\setminus J}f^{m}           = \Psi_{I\setminus J}(Z)=0,
    \end{align*}
    which concludes the induction step in the case $J\neq \emptyset$.
    
    Finally, if $J=\emptyset$, we have 
    \begin{equation*}
    \mathbb{E}\left( \sum_{ K \subset I} \mathcal{G}_{K}  \prod_{m\in K}f^{m}              \right)=\mathbb{E}\left(\Psi_{I}(f)\right)
    \end{equation*}
    \begin{multline*}
    =\mathbb{E}\Biggl(\sum_{\ell=0}^{|I|}(-1)^{\ell} \sum_{\emptyset \neq S^1,S^2,\ldots,S^{\ell}\subset I \atop  S^1,S^2,\ldots,S^{\ell} \text{disjoint}} \prod_{m \in I \setminus (\cup S^{k})}f^{m} \prod_{k\in \{ 1,2,\ldots, \ell\}}\\
    \mathbb{E}\biggl(\prod_{m\in S^k}f^m\biggl)\Biggl)
    \end{multline*}
    \begin{align*}
    &=\sum_{\ell=0}^{|I|}(-1)^{\ell} \sum_{\emptyset \neq S^1,S^2,\ldots,S^{\ell}\subset I \atop  S^1,S^2,\ldots,S^{\ell} \text{disjoint}}\mathbb{E}\left(\prod_{m \in I \setminus (\cup S^{k})}f^{m}\right)\prod_{k\in \{ 1,2,\ldots, \ell\}} \\ &\qquad\qquad\qquad\qquad\qquad\qquad\qquad\qquad\mathbb{E}\left(\prod_{m\in S^k}f^m\right) \\
    &=\sum_{\ell=0}^{|I|-1}(-1)^{\ell} \sum_{\emptyset \neq S^1,S^2,\ldots,S^{\ell+1} \text{disjoint} \atop  \cup_{k=1}^{\ell+1}S^k=I }  \prod_{k\in \{ 1,2,\ldots, \ell+1\}} \mathbb{E}\left(\prod_{m\in S^k}f^m\right) \\
     &\qquad\;+\sum_{\ell=1}^{|I|}(-1)^{\ell} \sum_{\emptyset \neq S^1,S^2,\ldots,S^{\ell} \text{disjoint} \atop  \cup_{k=1}^{\ell}S^k=I }  \prod_{k\in \{ 1,2,\ldots, \ell\}} \mathbb{E}\left(\prod_{m\in S^k}f^m\right)\\
    &=\sum_{\ell=1}^{|I|}(-1)^{\ell-1} \sum_{\emptyset \neq S^1,S^2,\ldots,S^{\ell} \text{disjoint} \atop  \cup_{k=1}^{\ell+1}S^k=I }  \prod_{k\in \{ 1,2,\ldots, \ell\}} \mathbb{E}\left(\prod_{m\in S^k}f^m\right) \\
     &\qquad +\sum_{\ell=1}^{|I|}(-1)^{\ell} \sum_{\emptyset \neq S^1,S^2,\ldots,S^{\ell} \text{disjoint} \atop  \cup_{k=1}^{\ell}S^k=I }  \prod_{k\in \{ 1,2,\ldots, \ell\}} \mathbb{E}\left(\prod_{m\in S^k}f^m\right)\\
     &=0,
    \end{align*}
    where at the fourth and fifth lines, we have split the sum into two cases according to whether $\cup_{k=1}^{\ell}S^k=I$.
\end{proof}


\subsection{Centering step}
To make the exposition clearer, we provide as an example the case $I=\{1,2,3\}$ and the features $f^m$ are one dimensional. To do this, let us first observe that in equation~\ref{eq:centering}, several terms of the sum are identical to each other. Indeed, if we rename the sets $S^{1},\ldots,S^{\ell}$ by $S^{\sigma(1)},\ldots,S^{\sigma(\ell)}$ where $\sigma$ is any permutation of $\{1,2,\ldots,\ell\}$, then we still obtain a valid term. In particular, collecting all of the terms which are equal to each other, we can rewrite the formula equivalently as follows: 
\begin{align*}
	\Psi_{I}(f)&=\sum_{\ell=0}^{|I| }(-1)^{\ell} \sum_{\emptyset \neq S^1,\ldots,S^{\ell}\subset I \atop  S^1,\ldots,S^{\ell} \text{disjoint}}  \mathbb{E}\left(\prod_{m\in S^k}f^{m}\right)\\
 &\qquad \qquad \qquad \qquad \qquad \quad \quad \quad \prod_{m \in I \setminus (\cup S^{k})}f^m\\
    &=\sum_{\ell=0}^{|I|}(-1)^{\ell} \ell! \sum_{\emptyset \neq S^1<\ldots<S^{\ell}\subset I \atop S^1,\ldots,S^l \text{disjoint} }\mathbb{E}\left(\prod_{m\in S^k}f^{m}\right)\\
    &\qquad \qquad \qquad \qquad \qquad \quad \quad \quad\prod_{m \in I \setminus (\cup S^{k})}f^m,
	\end{align*}
where `$<$' denotes any strict order relation between subsets fixed in advance. 

In this case, we have, using the notation $\overline{f^1f^2}=\mathbb{E}(f^1f^2)$ etc.: 
    \begin{align*}
    \hat{f}^{1,2,3}&=f^1\times f^2\times f^3\;-\\
    &\quad\overline{f^1}\times f^2\times f^3-f^1\times \overline{f^2}\times f^3-f^1\times f^2\times \overline{f^3}\;-\\
    &\quad\overline{f^1\times f^2}\times f^3-\overline{f^1\times f^3}\times f^2-\overline{f^2\times f^3}\times f^1\;-\\
    &\quad\overline{f^1\times f^2\times f^3}\;+\\
    &\quad2\times \overline{f^1} \times f^3\times  \overline{f^2}+2\times \overline{f^2}\times f^1\times  \overline{f^3}\;+\\
    &\quad2\times \overline{f^1}\times f^2\times \overline{f^3}+2 \times \overline{f^1\times f^2} \times \overline{f^3}\;+\\
    &\quad2\times \overline{f^1\times f^3}\times  \overline{f^2}+2\times \overline{f^2\times f^3} \times \overline{f^1}\;-\\
    &\quad6\times \overline{f^1}\times \overline{f^2}\times \overline{f^3}.
    \end{align*}

\section{Synthetic-Datasets}
We describe in detail the synthetic datasets we used in our main paper. 
We created two synthetic datasets with a common multimodal structure where each modality of a datapoint is a sequence of letters chosen randomly from a predefined set $\{A, C, G, T\}$. 
For modality $m$, each datapoint is a sequence of length $100$.
We created two specific information carrying sub-sequences of length 7---\textsc{ACGTAGC} and \textsc{GATGTAC}. 
For a mulimodal datapoint, an information subsequence is inserted at a random position with a probability $p_m$.
We created $10k$ multimodal samples and divided them into two equal halves.
The first half contains the first sub-sequence (\textsc{ACGTAGC}) inserted in each datapoint's modality with probability $p_m$, while the second half contains the second sub-sequence (\textsc{ACGTAGC}) inserted in each datapoint's modality with the same probability. 
Thus, the number of samples where the subsequence is inserted is approximately equal in the two parts across all the modalities.
Same can be said for the number of datapoints in the two halves where no information is inserted.
A high value of $p_m$ indicates that a vast majority of datapoints in modality $m$ contain information specific subsequences.

\subsection{Dataset with Independent Modalities}
We created the \textsc{SynthGene} dataset where the datapoints in the first half mentioned above are labeled as positive samples, and all the datapoints in the second half are labeled as negative. Thus, the dataset contains equal number of positive and negative samples.
\textsc{SynthGene} contains 10 modalities $M_{1:10}$, where modalities $M_2$ is created with $p_m=0.2$, $M_4$ with $p_m=0.3$, and  $M_7$ with $p_m=0.5$. The other modalities are created with $p_m=0$ to make them irrelevant to the classification problem. However, there could be some informative sequences in these modalities due to randomness. A high value of $p_m$ here indicates that more class specific sequences are present in that modality. Thus, the higher the value of $p_m$, the more relevant modality $m$ becomes.

\subsection{Dataset with Modality Interactions}
We also create the \textsc{SynthGene-tri} dataset, a trimodal version of \textsc{SynthGene}. However, this time, the informative subsequence is not class-specific. Instead, the label is defined by an exclusive-or (XOR) relationship between the first two modalities ($M_1$ and $M_2$). The label is $0$ if both modalities contain the subsequence or none of them does, and the label is $1$ otherwise (i.e., when one of the modalities contains the subsequence).
Note that modality $M_3$ does not contain any informative subsequence; thus, it is irrelevant.
As before, we generate a balanced dataset with 50\% of the samples being positive and 50\% negative.

\section{Real-world Datasets}

\paragraph{Motivation.}
We have chosen four datasets to study how humans perceive emotions from different data sources like sound, images, and text. These datasets are crucial for making computers understand emotions, analyzing human behavior, and improving AI-driven education. These datasets often provide complementary information that becomes clear when considering different modalities. The modalities of the datasets are intuitive for humans because humans well perceive the modalities in their daily experiences. Thus, these datasets are a great fit for our approach to studying and interpreting how different modalities interact to predict human behavior.

\paragraph{Datasets Preprocessing.}
To ensure reproducibility and allow performance comparison, we follow the data preprocessing used in the MultiBench \cite{liang2021multibench} benchmark for all used datasets.

\subsection{MOSI}
CMU Multimodal Opinion-level Sentiment Intensity (CMU-MOSI) dataset~\cite{zadeh2016mosi} comprises multimodal observations obtained from rigorously transcribed opinion video content from the YouTube platform, focusing on video blogs (vlogs). These videos contain individuals expressing opinions on various subjects while primarily maintaining direct eye contact with the camera. The transcriptions provide detailed and highly relevant sentiment-carrying information, such as pause fillers, stresses, and speech pauses.

\paragraph{Preprocessing Details.} We adopt the preprocessing steps detailed by \cite{liang2018computational,bager-zadeh-etal-2018-multimodal}:

\begin{enumerate}
    \item \textbf{Text (t)}: The spoken text is represented by a sequence of 300-dimensional word embeddings obtained by Global Vectors for Word Representation (GloVe)~\cite{pennington2014glove} trained on common crawl dataset. Timing of word utterances is obtained with the Penn Phonetics Lab Forced Aligner (P2FA).
    \item \textbf{Vision (v)}: We use visual features extracted by the Facet library ~\cite{de2011facial} and OpenFace facial behavior analysis toolkit ~\cite{baltruvsaitis2016openface}. Applying these models results in the following information:  facial action units, facial landmarks, head pose, gaze tracking, HOG features, and facial expression features.
    \item \textbf{Audio (a)}: As audio input to our model, we use a sequence of acoustic features extracted by COVAREP~\cite{degottex2014covarep}. The features include 12 Mel-frequency cepstral coefficients, pitch tracking and voiced/unvoiced segment features, glottal source parameters, peak slope parameters, and maxima dispersion quotients.
\end{enumerate}

\subsection{MOSEI}
CMU Multimodal Opinion Sentiment and Emotion Intensity (CMU-MOSEI) ~\cite{bager-zadeh-etal-2018-multimodal} is the largest dataset for sentiment analysis and emotion recognition. The dataset consists of annotated videos collected from Youtube of individuals discussing a topic of interest in front of the camera. Each video is annotated for sentiment and the presence of 9 discrete emotions (angry, excited, fearful, sad, surprised, frustrated, happy, disappointed, and neutral) and continuous emotions (valence, arousal, and dominance).

\paragraph{Preprocessing Details.} We use a preprocessing approach similar to that employed for CMU-MOSI and UR-FUNNY datasets.

\subsection{MUStARD} 
MUStARD is a multimodal dataset that has been developed for the purpose of advancing automated sarcasm detection research~\cite{castro2019multimodal}. This dataset is curated from renowned television shows such as Friends, The Golden Girls, The Big Bang Theory, and Sarcasmaholics Anonymous. MUStARD contains audiovisual statements that have been labeled to denote instances of sarcasm. Each statement is accompanied by its contextual background, contributing supplementary insights into the circumstances surrounding the statement's occurrence. This, in turn, poses an additional challenge in effectively modeling multimodal information over extended periods. The deliberate selection of sarcasm as the annotation task is motivated by the need for intricate modeling of complementary details, especially in cases where semantic cues from different modalities may not align.

\paragraph{Dataset Preprocessing.} We use the preprocessing steps suggested in the original paper~\cite{castro2019multimodal}, which goes as follows for each modality:

\begin{enumerate}
    \item \textbf{Text (t)}: The embeddings for the text modality are obtained from pretrained BERT~\cite{devlin2018bert} representations and from Common Crawl pretrained 300-dimensional GloVe word vectors for each token.
    \item \textbf{Vision (v)}: Visual features are extracted for each frame using the pool 5 layers of ResNet-152 that is pre-trained on the Imagenet dataset. Every frame is preprocessed by resizing, center cropping, and normalizing. The OpenFace analysis tool has also been used to extract facial expression features.

    \item \textbf{Audio (a)}: Low-level features from the audio data stream are extracted using the speech processing library Librosa. Additionally, COVAREP is used to extract acoustic features, including 12 Melfrequency cepstral coefficients, pitch tracking and voiced/unvoiced segment features, glottal source parameters, peak slope parameters, and maxima dispersion quotients.
\end{enumerate}

\subsection{UR-FUNNY}
UR-FUNNY~\cite{hasan2019ur} is the first large-scale multimodal dataset of annotated video excerpts extracted from TED talks for humor detection in human speech. The essence of humor, as an intricate communicative tool, is inherently multimodal, interweaving articulate language (text), complementary gestures (visual), and expressive prosodic cues (acoustic).

\paragraph{Preprocessing Details.} We use a preprocessing approach similar to that employed for CMU-MOSI and UR-FUNNY datasets.

\section{Experimental Setup}
In our experimental Setup, we compare the baselines methods against our two methods: 1) \acrfull{mnl} that captures the linear combinations of the modality representations in an interpretable manner, 2) \acrlong{itf} that can capture modality combinations and their interactions in an interpretable manner. We provide an implementation of these methods and our appendix, which we plan to release to the public community.

\subsection{Baselines}

\begin{table*}
\centering
    \begin{tabular}{lccccccl}\toprule
        & \multicolumn{3}{c}{Non-Interpretable} & \multicolumn{4}{c}{Interpretable} \\
        \cmidrule(lr){2-4}\cmidrule(lr){5-8}
         {} & LF-Concat & TF Network & MulT & MultiRoute & MRO & MNL & InTense \\ 
        \midrule
        MUStARD & $65.4\pm1.0$ & $62.1\pm0.8$& $71.2\pm0.4$ & $65.9\pm0.3$ & $66.5\pm0.6$ & $67.4\pm0.5$ & $\mathbf{69.6\pm0.4}$ \\
        CMU-MOSI & $75.2\pm0.8$ & $74.4\pm0.2$ & $82.7\pm0.3$& $76.8\pm0.8$ & $75.8\pm0.6$ & $\mathbf{80.8\pm0.8}$ & $79.7\pm0.6$ \\
        UR-FUNNY & $62.2\pm0.6$ & $61.2\pm0.4$ & $66.2\pm0.1$& $63.6\pm0.9$ & $63.4\pm0.4$ & $63.4\pm0.3$& $\mathbf{65.1\pm0.6}$\\
        CMU-MOSEI & $79.2\pm0.4$ & $79.4\pm0.5$ & $82.1\pm0.7$ & $80.2\pm0.8$ & $79.7\pm0.6$ & $80.5\pm0.4$ & $\mathbf{81.5\pm0.3}$\\
        
        \bottomrule
    \end{tabular}
     \caption{Accuracies for different baselines on the test fold. Each experiment is carried out ten times to compute the statistics.}
    \label{tab:performance_appendix}
\end{table*}
We compare our method against three non-interpretable baselines and two interpretable baselines, which we describe below with key points over their implementation.

\paragraph{Non-interpretable Baselines.} We consider the following three  baselines:

\textbf{LF-Concat} is a non-interpretable baseline, where the output of the encoder modules is concatenated, followed by a fully-connected layer for prediction. We chose this baseline as it is very near to our own implementations. We provide our own implementation of this method.

\textbf{TF-Network.} from \cite{zadeh2017tensor} is non-interpretable method used as a baseline. We chose this baseline as InTense uses a tensor fusion module to capture modality interactions. We provide an implementation of the TF Network in our code adopted from the original implementation of the paper.

\textbf{MulT.} Multimodal transformers from \cite{tsai2019multimodal} is a non-interpretable method used as the third baseline. We chose this as a  baseline as it is the best-performing model on the chosen four datasets according to \cite{liang2021multibench}. We provide an implementation of this method in our code adopted from the original implementation of the paper.

\paragraph{Interpretable Baselines.} We consider the following two baselines for a comparison:

\textbf{MRO.} Multimodal Residual Optimization (MRO) from \cite{wortwein2022beyond} is considered as a baseline as it aims at disentangling the uni-, bi-, and tri-modal interactions. We provide an implementation of this method in our code repository.

\textbf{Multiroute.} Multimodal Routing from \cite{sabour2017dynamic} is considered a baseline as it aims to interprect the modalities and their interactions. We use the implementation provided from the authors to evaluate the results on the four datasets.

\subsection{Encoder Modules} \label{sec:encoder}
We first apply a standard batch normalization over the input features of each modality, followed by a GRU (Gated Recurrent Unit) module for each modality. We use the same encoder for all experiments to test our baselines except MulT (Multimodal Transformers), as it relies on the multimodal transformer architecture. We report MultiRoute as well using its original implementation.

\subsection{Network Architecture}
Figure \ref{fig:concept_fig} summarizes the network architecture for InTense, with the tensor fusion modules to capture modality interactions and the Iterative Batch Normalization module to untangle modality interactions from individual modality effects.

\begin{figure}[h]
    \centering
    \includegraphics[width=\columnwidth]{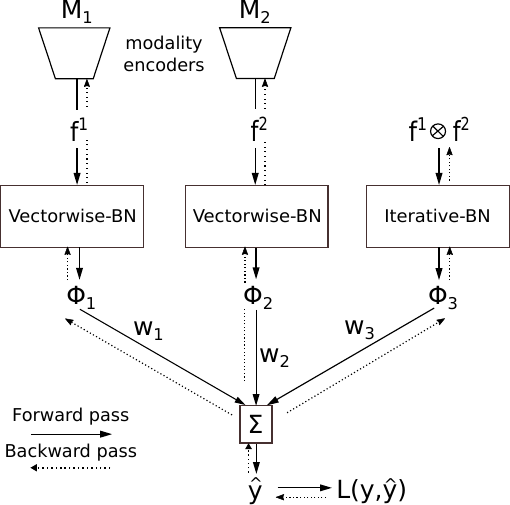}
    \caption{The figure shows the framework of InTense for two modalities. The modalities and their fusion are jointly trained to obtain the representations ($f^1$, $f^2$), which then undergo through our TensorFusion module to capture the multiplicative interactions. We apply IterBN to disentangle the multiplicative interactions from individual modality effects. All the modality representations and their multiplicative interactions are fused together in the end to obtain the final prediction.  }
    \label{fig:concept_fig}
\end{figure}

\subsection{Performance Evaluation}
We use four multimodal datasets that are evaluated in two different ways. These are explained below.

\paragraph{1. MOSI, 2. MOSEI.} These two datasets are trained using the MAE (Mean Absolute Error) objective on sentiment analysis, where the labels are integers between $-3$ and $3$. The model with the least MAE on the validation set is chosen during training. During testing, the accuracy is calculated with the positive/negative class setting, where a prediction less than $0$ is considered a negative class, and a prediction greater than $0$ is considered a positive class. The data points with label $0$  are considered as neutral are not considered while evaluating accuracy on the test set.

\paragraph{3. SARCASM, 4. UR-FUNNY.} These are the binary-classification datasets with labels $0$ and $1$. The training objective is Cross-Entropy Loss. The model with the least loss on the validation set is chosen during training. The accuracy is computed on the test fold using the best model obtained on the validation set.

\paragraph{5. AV-MNIST, 6. ENRICO.} These are the multi-class classification datasets containing  $10$ classes for AV-MNIST and $23$ classes for ENRICO daaset
The training objective is Cross-Entropy Loss.
The model with the least loss on the validation set is chosen during training.
The accuracy is computed on the test fold using the best model obtained on the validation set.

For all the experiments, we report the mean accuracy and its standard deviation on ten random seeds.
\begin{table*}
    \centering
    \begin{tabular}{lcccccl}\toprule
    {} & {} & lr & batch\_size & hidden\_dims & tf\_indices & tf\_latent\_dim\\ \midrule
    \multirow{2}{*}{MOSEI}&MNL&$1e-3$&$32$&$[64,64,768]$&None&None \\ 
    &InTense&$1e-3$&$32$&$[64,64,1024]$&['13']&8 \\ \midrule
    \multirow{2}{*}{MOSI}&MNL&$5e-3$&$32$&$[64,64,768]$&None&None \\ 
    &InTense&$1e-3$&$32$&$[64,64,768]$&['13']&8 \\ \midrule
    \multirow{2}{*}{UR-FUNNY}&MNL&$5e-3$&$32$&$[64,64,768]$&None&None \\ 
    &InTense&$1e-3$&$32$&$[64,64,768]$&['13']&8 \\ \midrule
    \multirow{2}{*}{MUStARD}&MNL&$5e-3$&$32$&$[64,64,768]$&None&None \\ 
    &InTense&$1e-3$&$32$&$[64,64,768]$&['13']&8 \\ \midrule
    \end{tabular}
    \caption{The table shows the best performing hyperparamaters for MNL and InTense across all four real-world dataset}
    \label{tab:hyper}
\end{table*}
\subsection{Hyperparameters} \label{sec:hparams}

Below, we will delve into the key hyperparameters used to obtain the reported performance in InTense. In our implementation, we have used the same nomenclature for these hyperparameters to make our code easier to follow. Table \ref{tab:hyper} shows a list of the hyperparameter settings we used to achieve the best results across all baseline models.

\begin{enumerate}
\item \textbf{learninng rate (lr)}: For all the experiments, we chose the initial learning rate between $0.0005$ to $0.005$. We used an exponential learning rate decay that decays the learning rate of each parameter group by gamma every epoch. The value of gamma is set between $0.9$ and $1$.

\item \textbf{epochs}: We ran all our experiments for $100$ epochs and chose the model with the best validation metrics. The validation metrics for the datasets are shown in Table \ref{tab:hyper}

\item \textbf{batch\_size}: Batch size denotes the number of datapoints in a mini-batch. We chose a batch size of either $16$ or $32$ for all the experiments.

\item \textbf{weight\_decay}: As per our optimization framework defined in Equation \ref{eq:deepmkl} of the main paper, a weight decay is applied to every modality encoder. The value of weight decay was set to $0.01$ for all the experiments.

\item \textbf{reg\_param}: It is the hyperparameter $\Lambda$ in Equation \ref{eq:deepmkl} of our framework that tunes the block-norm regularization for the fusion layer. We set its value to $0.05$ for all the experiments.

\item \textbf{p}: It defines the value of the $q-$norm over the weights of the fusion layer (described in the Equation \ref{eq:deepmkl} of the main paper). The value can be used to tune the sparsity of relevance scores with $1 \leq q \leq 2$. We set the value of this hyperparameter to be $1$ for all our experiments.

\item \textbf{hidden\_dims}: This hyperparameter is an array with dimensions matching the count of distinct modalities in the dataset. Each individual element within the array determines the resultant dimensionality of a corresponding modality encoder module. Table \ref{tab:hyper} shows a list of the hyperparameter settings.

\item \textbf{tf\_indices}: This hyperparameter is an array of modality indices that defines the interactions captured by InTense. E.g. a value $['12','23']$ would indicate an interaction between modalities at index $1$,$2$ and  an interaction between modalities at index $2$,$3$ is learned. 

\item \textbf{tf\_latent\_dim}: This hyperparameter is an integer value that defines the dimension of the modality representations undergoing the tensor product. To alleviate computation overhead, the dimension of the output from the modality encoders is mapped to a size of \textit{tf\_latent\_dim}.

\item \textbf{scheduler\_gamma}: Decays the learning rate of each parameter group by gamma every epoch. The default value for all the experiments is set to $0.9$.

\item \textbf{optimizer}: Set the optimizer for learning. By default, we used PyTorch implementation of the AdamW algorithm.

\item \textbf{task}: Defines the task for which the model is trained. The task is set regression for the dataset ``MOSI" and ``MOSEI" where the labels are regression labels. For other datasets, the default value of \textit{task} is set to ``classification."

\item \textbf{act\_before\_vbn}: In our VBN normalization scheme, the output from the modality encoders are passed through ReLU activation when this parameter is set to True. Otherwise, the VBN normalization is directly applied to the outputs of the modality encoders. We found through experiments that setting this parameter doesn't improve the performace, so the value is set to False for all the experiments.
\end{enumerate}


\section{Ablation Studies}
\begin{table*}
    \centering
    \begin{tabular}{l|cc|cc|cl}\toprule
    {} & \multicolumn{2}{c}{GRUwithBN} & \multicolumn{2}{c}{Transformers}  & \multicolumn{2}{c}{MLP} \\ \midrule
    {} & MNL & InTense & MNL & InTense & MNL & InTense   \\ \midrule
    MOSI &\bolden{$80.8\pm0.8$}&$79.7\pm0.6$&$78.2\pm0.2$&$79.5\pm0.4$&$77.4\pm0.6$&$78.2\pm0.4$\\
    MOSEI &$80.5\pm0.4$&\bolden{$81.5\pm0.3$}&$80.2\pm0.5$&$81.2\pm0.2$&$78.8\pm0.7$&$79.7\pm0.2$\\
    UR-FUNNY &$64.4\pm0.3$&$65.1\pm0.6$&$65.6\pm0.3$&\bolden{$65.8\pm0.3$}&$62.6\pm0.3$&$63.9\pm0.5$\\
    MUStARD &$67.4\pm0.5$&$69.6\pm0.4$&$67.8\pm0.5$&\bolden{$69.8\pm0.2$}&$65.8\pm0.4$&$66.4\pm0.3$\\
    \bottomrule
    \end{tabular}
    \caption{The table shows the Accuracy of our methods MNL and InTense on the test set. Here we use three different encoders to perform an evaluation of our method. We report the mean accuracy and its standard deviation over $10$ experiments performed on random seeds.}
    \label{tab:ablation}
\end{table*}

\begin{table*}
    \centering
    \begin{tabular}{l|cc|cc|cc|cl}\toprule
    {} & \multicolumn{2}{c}{MOSI} & \multicolumn{2}{c}{MOSEI}  & \multicolumn{2}{c}{MUStARD} & \multicolumn{2}{c}{UR-FUNNY}\\ \midrule
    {} & Acc. & Rel. & Acc. & Rel. & Acc. & Rel. & Acc. & Rel.  \\ \midrule
    text(t) &\textbf{74.2}&{\textbf{0.75}}&\textbf{78.8}&{\textbf{0.85}}&\textbf{68.6}&\textbf{0.78}&\textbf{58.3}&{\textbf{0.77}}\\
    audio(a) &65.5&{0.11}&66.4&{0.05}&64.9&{0.07}&57.2&{0.15}\\
    video(v) &66.3&{0.14}&67.2&{0.10}&65.7&{0.16}&57.3&{0.07}\\
    \bottomrule
    \end{tabular}
    \caption{The table shows that the modality with highest accuracy individually gets a high relevance score when trained through InTense. \textit{Acc.} stands for the Accuracy of the model on the test set, and \textit{Rel.} stands for the Relevance scores obtained through InTense}
    \label{tab:correlation}
\end{table*}
\subsection{What can go wrong}
We conducted experiments where we found that without proper normalization, the interpretability scores are biased toward higher-order interactions, even when those interactions do not add any useful information as compared to the lower-order modality interactions among fewer modalities.

To conduct our experiments, we considered the \textsc{SynthGene-tri} dataset. Within this dataset, we have established that while labels are influenced solely through modality interaction between modalities $1$ and $2$, no valuable information for label determination is present in modality $3$. Therefore, an interaction involving fewer modalities ($1 \;\text{and}\; 2$) should receive the highest relevance score.

The figure~\ref{fig:biased_mkl_scores} illustrates that in the absence of iterative batch normalization (IterBN), our methods exhibit biased scores towards the modality interaction among modalities $1, 2, \text{and}; 3$ (depicted by the orange bars). Notably, the figure demonstrates that IterBN successfully mitigates this bias, correctly assigning the highest relevance score to the modality interaction between modalities $1\;\text{and}\;2$.

\begin{figure}
	\centering
	\includegraphics[width=0.9\columnwidth]{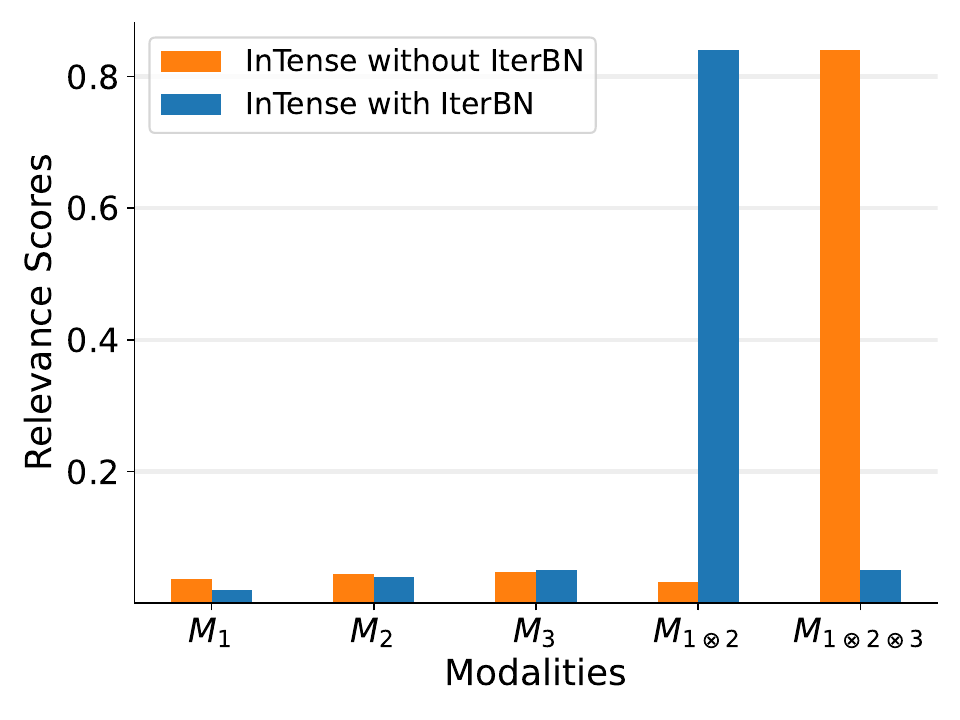}
	\caption{An illustration of our method when higher-order modality-interactions are involved. Naive normalization with VBN leads to biased results (orange bars) where the relevance scores are concentrated toward higher-order interactions $M_{1\otimes2\otimes3}$. In contrast, InTense (blue bars) correctly assigns a high relevance score only to the interaction $M_{1\otimes2}$, which contains all class-specific signals.}
	\label{fig:biased_mkl_scores}
\end{figure}

\subsection{Modality Encoder Modules} 
We tried different encoder modules on the top of the preprocessed features for the approaches MNL, and InTense.  The table \ref{tab:ablation} shows the accuracy obtained on the test set.
\paragraph{GRUwithBN.} These encoder modules are used to compare and report the results in the main paper. The archtecture of the encoder modules is explained in Appendix \ref{sec:encoder}
\paragraph{Tranformer.} We also comapare our results when a Tranformer encoder is used for each modality. We trained a Transformer from scratch using its PyTorch implementation  with $4$ layers and $5$ heads. Table \ref{tab:ablation} shows that these encoder performs slightly better on an average than GRUwithBN on two out of four datasets.
\paragraph{MLP.} We used a Multi Layer Perceptron as an encoder for all the three modalities.

\subsection{MultiRoute global interpretabilty scores}
We compare the interpretability scores of our method against the global interpretability scores obtained from MultiRoute~\cite{tsai2020multimodal}. Figure~\ref{fig:mkl_iter_bn} in the main paper depicts the biased interpretability scores from MultiRoute.

In this section, we explain the process of obtaining the interpretability scores from MultiRoute. At first, we used the same process as described in the paper to obtain the relevance scores. 
At first, we trained a multimodal routing based capsule model on our synthetic dataset \textsc{SynthGene-tri}. Next, as desribed in the paper, we computed the average routing coefficient $\overline{r_{ij}}$ for all three modalities and their interactions. We normalized the routing coefficients so that they sum to $1$. This brings them in the same scale as our interpretability scores.

\subsection{Correlation between Relevance scores and Unimodal Accuracy}
In the main paper, we show through our synthetic dataset that the relevance scores from InTense correlate with the accuracies when trained on individual modalities. Table \ref{tab:correlation} shows that the same correlation can be seen on the real-world multimodal datasets when each modality is trained independently.

\clearpage

\section{Source Code for Reproducibility}
We provide an implementation of our methods in PyTorch as a zip file. In order to run the code, unzip the file's content and navigate into the root directory. The code has been tested on \textit{Python 3.10+}, and \textit{PyTorch 1.12+} on Ubuntu 22.10 and Red Hat Enterprise Linux Server. We recommend using and Linux-based operating system and a virtual environment management tool of your choice to install the packages. We provide step-by-step instructions to set up our code through \textit{Conda}.

\subsection{Code Setup}

\paragraph{Step 1:} Create a conda environment for InTense. 
\begin{lstlisting}[language=Python]
conda create -n intense python=3.10
\end{lstlisting}

\paragraph{Step 2:} Install a compatible PyTorch version inside the environment. Check your compatible version at the official webpage: \textit{https://pytorch.org/}
\begin{lstlisting}[language=Python]
conda activate intense
conda install pytorch torchvision pytorch-cuda=11.7 -c pytorch -c nvidia
\end{lstlisting}

\paragraph{Step 3:} Set up the code to install all the required packages by running the following commands. We use \textit{pip} for dependency management:
\begin{lstlisting}[language=Python]
pip install -U -r requirements.txt
pip install -e .
\end{lstlisting}
We are ready to run the experiments now!

\subsection{Running Experiments}
We describe the scripts here to run the experiments for synthetic and real-world datasets.

\paragraph{Synthetic Datasets.} In order to run different experiments, use the following scripts:
\begin{enumerate}
    \item \textsc{SythGene} To Run the experiment for the \textsc{SythGene} dataset with independent modalities, run the following command:
    \begin{lstlisting}[language=Python]
python intense/training_modules/synthetic/syn_mnl.py --scheduler-gamma 0.9 --epochs 20 --experiment-name "synthgene" --batch-size 32 --latent-dim 8 --lr 1e-3 --reg-param 0.05
\end{lstlisting}

\item \textsc{SythGene-tri} Run experiment for modality interactions, using:
\begin{lstlisting}[language=Python]
python intense/training_modules/synthetic/syn_xor.py --scheduler-gamma 0.9 --epochs 20 --experiment-name "synthgene" --batch-size 32 --latent-dim 8 --lr 1e-3 --reg-param 0.05
\end{lstlisting}
\end{enumerate}

\paragraph{MOSI} In order to run different experiments, use the follwing scripts:
\begin{enumerate}
    \item MNL (Multiple Neural Learning)
\begin{lstlisting}[language=Python]
python intense/training_modules/regression/train_mnl.py --scheduler-gamma 1 --epochs 100 --is-packed --experiment-name "mnl_mosi" --hidden-dims 64 64 768 --lr 1e-3 --dataset mosi
\end{lstlisting}

    \item InTense 
\begin{lstlisting}
python intense/training_modules/regression/train_intense.py --scheduler-gamma 1 --epochs 5 --is-packed --experiment-name "mkl_intensfuse_mosi_eval_2" --hidden-dims 64 64 768 --dataset mosi --lr 1e-3 --tf-latent-dim 8 --tf-indices '13'
\end{lstlisting}
    \item MRO
\begin{lstlisting}
python intense/training_modules/regression/train_mro.py --scheduler-gamma 0.9 --epochs 10 --is-packed --experiment-name "mro_mosi_eval" --affine --hidden-dims 64 64 768 --dataset mosi --lr 5e-3 --tf-latent-dim 8
\end{lstlisting}
    \item MulT
\begin{lstlisting}
python intense/training_modules/regression/train_mult.py --scheduler-gamma 0.9 --epochs 100 --experiment-name "mosi_mult" --lr 1e-3 --dataset mosi --num-workers 2
\end{lstlisting}
\end{enumerate}

\paragraph{MOSEI} In order to run different experiments, use the following scripts:
\begin{enumerate}
    \item MNL (Multiple Neural Learning)
\begin{lstlisting}[language=Python]
python intense/training_modules/regression/train_mnl.py --scheduler-gamma 0.9 --epochs 100 --is-packed --experiment-name "mnl_mosei" --hidden-dims 64 64 1024 --lr 1e-3 --dataset mosei_senti
\end{lstlisting}

    \item InTense 
\begin{lstlisting}[language=Python]
python intense/training_modules/regression/train_intense.py --scheduler-gamma 0.9 --epochs 200 --is-packed --experiment-name "intense_mosei" --hidden-dims 64 64 1024 --dataset mosei_senti --lr 1e-3 --tf-latent-dim 8 --tf-indices '13'
\end{lstlisting}

    \item MRO
\begin{lstlisting}
python intense/training_modules/regression/train_mro.py --scheduler-gamma 0.9 --epochs 100 --is-packed --experiment-name "mro_mosei" --affine --hidden-dims 32 32 128 --dataset mosei_senti --lr 1e-3 --tf-latent-dim 8 --num-workers 4
\end{lstlisting}
    \item MulT
\begin{lstlisting}
 python intense/training_modules/regression/train_mult.py --scheduler-gamma 0.9 --epochs 5 --experiment-name "mult_mosei" --lr 1e-3 --dataset mosei_senti --num-workers 2
\end{lstlisting}
\end{enumerate}

\paragraph{UR-FUNNY} In order to run different experiments, use the following scripts:
\begin{enumerate}
    \item MNL (Multiple Neural Learning)
\begin{lstlisting}[language=Python]
python intense/training_modules/classification/train_mnl.py --scheduler-gamma 1 --epochs 100 --is-packed --experiment-name "humor_mkl" --hidden-dims 32 32 128 --lr 1e-3 --reg-param 0.01 --batch-size 16 --num-workers 4 --dataset humor
\end{lstlisting}

    \item InTense 
\begin{lstlisting}[language=Python]
python intense/training_modules/classification/train_intense.py --scheduler-gamma 0.9 --epochs 20 --is-packed --experiment-name "humor_intense" --hidden-dims 32 32 64 --tf-latent-dim 4 --lr 1e-3 --scheduler-gamma 1 --reg-param 0.01 --batch-size 32 --num-workers 2 --dataset humor --tf-indices '23'
\end{lstlisting}
    \item MRO
\begin{lstlisting}[language=Python]
python intense/training_modules/classification/train_mro.py --scheduler-gamma 0.9 --epochs 200 --is-packed --experiment-name "humor_mro_eval_2" --hidden-dims 32 32 256 --lr 1e-3 --scheduler-gamma 0.9 --reg-param 0.01 --batch-size 16 --num-workers 2 --tf-latent-dim 16 --dataset humor
\end{lstlisting}
    \item MulT
\begin{lstlisting}[language=Python]
python intense/training_modules/classification/train_mnl.py --scheduler-gamma 1 --epochs 100 --is-packed --experiment-name "humor_mkl" --hidden-dims 32 32 128 --lr 1e-3 --reg-param 0.01 --batch-size 16 --num-workers 4 --dataset humor
\end{lstlisting}

\end{enumerate}

\paragraph{MUStARD} In order to run different experiments, use the following scripts:
\begin{enumerate}
    \item MNL (Multiple Neural Learning)
\begin{lstlisting}[language=Python]
python intense/training_modules/classification/train_mnl.py --scheduler-gamma 0.9 --epochs 100 --is-packed --experiment-name "sarcasm_mnl" --hidden-dims 32 32 128 --lr 5e-3 --scheduler-gamma 0.9 --reg-param 0.01 --batch-size 16 --num-workers 4 --dataset sarcasm
\end{lstlisting}

    \item InTense 
\begin{lstlisting}[language=Python]
python intense/training_modules/classification/train_intense.py --scheduler-gamma 0.9 --epochs 200 --is-packed --experiment-name "sarcasm_mkl_intensfuse_eval" --hidden-dims 32 32 128 --tf-latent-dim 8 --lr 1e-3 --scheduler-gamma 0.9 --reg-param 0.01 --batch-size 16 --num-workers 2 --dataset sarcasm --tf-indices '13'
\end{lstlisting}
    \item MRO
\begin{lstlisting}[language=Python]
intense/training_modules/classification/train_mro.py --scheduler-gamma 0.9 --epochs 200 --is-packed --experiment-name "sarcasm_mro_eval_3" --hidden-dims 32 32 256 --lr 1e-3 --scheduler-gamma 0.9 --reg-param 0.01 --batch-size 16 --num-workers 2 --tf-latent-dim 8 --dataset sarcasm
\end{lstlisting}
    \item MulT
\begin{lstlisting}[language=Python]
python intense/training_modules/classification/train_mnl.py --scheduler-gamma 1 --epochs 100 --is-packed --experiment-name "humor_mkl" --hidden-dims 32 32 128 --lr 1e-3 --reg-param 0.01 --batch-size 16 --num-workers 4 --dataset humor
\end{lstlisting}
    
\end{enumerate}

\paragraph{}

All information about the training arguments is provided in Appendix \ref{sec:hparams}. For further information, use the "-\,-help" argument while running the command.

\paragraph{Note.} We use the MLFlow library for experiment tracking; you can run the following command to start the mlflow server from the shell. The relevance scores are automatically computed and tracked during the training.
\begin{lstlisting}
mlflow server --backend-store-uri file:experiments --no-serve-artifacts
\end{lstlisting}

\end{document}